%% file: arxiv_2026.tex
\pdfoutput=1 
\documentclass{article}

\PassOptionsToPackage{numbers, compress}{natbib}

\usepackage[preprint]{neurips_2026}
\usepackage{amsthm}
\newtheorem{lemma}{Lemma}
\usepackage{algorithm}
\usepackage{algorithmic}
\usepackage{enumitem}
\usepackage{placeins}

\usepackage[utf8]{inputenc} 
\usepackage[T1]{fontenc}    
\usepackage{hyperref}       
\usepackage{url}            
\usepackage{booktabs}       
\usepackage{amsfonts}       
\usepackage{nicefrac}       
\usepackage{microtype}      
\usepackage{xcolor}         
\usepackage{graphicx}
\usepackage{subcaption}
\usepackage{float}
\usepackage{amsmath}
\usepackage{amssymb}
\usepackage{mathtools}
\usepackage{amsthm}
\usepackage{graphicx}
\usepackage{xcolor}
\usepackage{multirow}

\usepackage[capitalize,noabbrev]{cleveref}

\title{Information-Theoretic Classifier-Free Guidance with Adaptive Schedule Optimization}

\author{%
  Haobo Chen\\
  Department of Computer Science\\
  University of California, Santa Barbara\\
  \texttt{haobo@ucsb.edu}\\
  \And
  Xiangxiang Xu\\
   Department of Computer Science\\
  University of Rochester\\
  \texttt{ xiangxiangxu@rochester.edu}\\
  \And
  Yuheng Bu\\
   Department of Computer Science\\
  University of California, Santa Barbara\\
  \texttt{buyuheng@ucsb.edu}
}

\begin{document}

\maketitle

\begin{abstract}
Diffusion models have achieved strong performance in image, text-to-image, and video generation, where conditional generation is often controlled by classifier-free guidance (CFG). CFG improves condition consistency by increasing a guidance weight, but stronger guidance typically reduces diversity and distributional coverage. It remains unclear how this consistency-coverage trade-off should be controlled across the reverse trajectory, since the distribution induced by CFG is not simply the fixed-time tilted distribution given by the guided score field.
To address this issue, we propose an information-theoretic framework for CFG schedule optimization. Our approach uses a clean endpoint reference to specify the desired consistency-coverage trade-off, while optimizing the actual distribution induced by the guided sampler toward this reference. We derive trajectory-level formulas to estimate the objective from samples and score evaluations, avoiding explicit density estimation. On ImageNet-512 with EDM-XXL and COCO with SD-XL, the learned schedules achieve competitive or improved trade-offs over constant guidance and allocate guidance selectively across noise levels.


\end{abstract}

\input{01_intro}

\input{02_preliminaries}

\input{03_information_objective}

\input{04_algorithm}
\input{05_experiments}

\input{06_conclusion}

\newpage
\bibliographystyle{ieeetr}
\bibliography{ref}

\newpage
\appendix
\input{appendix}

\end{document}

%% file: 01_intro.tex
\section{Introduction}
\label{sec:intro}

Diffusion models have emerged as a powerful paradigm for generative modeling. Building on denoising diffusion and score-based formulations~\citep{sohl2015deep,ho2020denoising,song2019generative,song2021scorebased}, they have achieved strong performance across many domains. In image generation, latent diffusion models enable high-resolution synthesis by moving the diffusion process to a learned latent space~\citep{rombach2022high}, while recent large-scale systems further improve text-to-image and ultra-high-resolution image generation~\citep{podell2023sdxl,zhang2025diffusion4k}. Beyond images, diffusion models have been extended to multimodal generation~\citep{saharia2022photorealistic,ramesh2022hierarchical, ho2022video,ho2022imagenvideo,austin2021structured}.



Much of the practical usefulness of diffusion models comes from conditional generation, where samples must follow class labels, text prompts, or other structured conditions. This control is central to modern image, text-to-image, and video generation systems, where success depends not only on quality but also on faithful alignment with the specified condition.
Classifier-free guidance (CFG) has become one of the most widely used mechanisms for improving such conditional generation~\citep{ho2022classifierfree}, building on earlier classifier-guided sampling~\citep{dhariwal2021diffusion}.

Specifically, CFG combines conditional and unconditional score estimates through a guidance weight $w$. Larger $w$ often improves condition consistency and perceptual quality, but this typically comes at the cost of reduced diversity and distributional coverage. 
Figure~\ref{fig:guidance_motivation} illustrates this effect qualitatively. Comparing the CFG columns shows that increasing a constant guidance $w$ does not simply refine samples generated with lower guidance; it can change the global composition, style, and object layout, while also pushing different samples toward similar visual patterns and reducing diversity.

Such a nuanced trade-off has motivated methods beyond a single constant guidance weight. Some works show that applying guidance only over selected noise intervals can improve sample quality and distribution coverage over constant guidance throughout the reverse process~\citep{kynkaanniemi2024applying}. Some adaptive and learned guidance scheduling methods further use time-dependent weights to balance prompt alignment and image quality~\citep{malarz2025classifier,galashov2025learn}. 
Prior stage-wise analyses suggest that CFG can have qualitatively different effects across noise levels, including early direction changes, intermediate mode separation, and late concentration~\citep{jin2025stage}. A weaker generative model can also be used as the guidance reference to improve generation~\citep{karras2024guiding}.

However, a principled way for choosing the guidance schedule remains unclear. The main difficulty is characterizing the actual distribution induced by CFG. At a fixed time, CFG resembles a distributional tilt toward the condition, but its samples are produced by the entire reverse trajectory and need not match this fixed-time tilted interpretation~\citep{bradley2024classifier,chidambaram2024guidance,moufad2025cfgig}. A further practical challenge is that the generated distribution is implicit: during generation, we typically have only samples and score evaluations, rather than an explicit density, making consistency-based objectives difficult to estimate directly.
\begin{figure*}[t]
    \centering
    \begin{subfigure}[t]{0.48\textwidth}
        \centering
        \includegraphics[width=\linewidth]{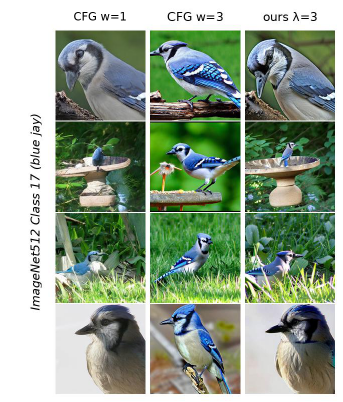}
        \label{fig:guidance_motivation_imagenet}
    \end{subfigure}
    \hfill
    \begin{subfigure}[t]{0.48\textwidth}
        \centering
        \includegraphics[width=\linewidth]{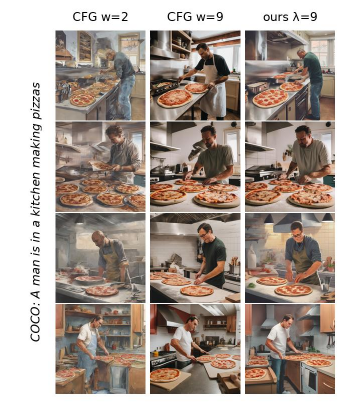}
        \label{fig:guidance_motivation_coco}
    \end{subfigure}
\vspace{-1.5em}
    \caption{
    Qualitative comparison of constant and learned guidance schedules.
    Left: ImageNet class-conditional samples under \(w=1\), constant guidance \(w=3\), and our learned schedule with \(\lambda=3\).
    Right: COCO text-to-image samples for the prompt ``A man is in a kitchen
    making pizzas,'' comparing constant guidance \(w=2\), constant
    guidance \(w=9\), and our learned schedule with \(\lambda=9\).
    }
    \label{fig:guidance_motivation}
    \vspace{-1.4em}
\end{figure*}

This leads to the central question of this paper: 
\vspace{-0.65em}
\begin{center}
\textbf{\emph{
Can we learn CFG schedules by optimizing an objective that balances consistency
and coverage for the actual CFG-induced distribution?
}}
\end{center}
\vspace{-0.65em}
We address this question by using a clean endpoint reference to specify the desired consistency-coverage trade-off, and by optimizing the actual distribution induced by the guided sampler toward this reference.
The resulting objective combines two quantities evaluated under the actual CFG distribution: a consistency term, which measures alignment between generated samples and the condition, and a coverage term, which penalizes deviation from the original data distribution. Optimizing this trajectory-level objective yields an adaptive guidance schedule. 


As shown in Figure~\ref{fig:guidance_motivation}, our learned schedules improve condition consistency compared with smaller constant guidance, while limiting excessive deviation from the original data distribution and preserving diversity and coverage compared with larger $w$.
Our main contributions include:
\begin{itemize}[itemsep=-2pt,topsep=0pt, leftmargin=10pt]

\item We use a clean endpoint reference to specify the desired consistency-coverage trade-off, and formulate CFG schedule optimization as moving the actual distribution induced by the guided sampler toward this reference.

\item We derive trajectory-level formulas for estimating the consistency and coverage terms in the objective from samples and score evaluations, without explicit density estimation.

\item We develop an adaptive schedule optimization method for learning non-uniform guidance weights across noise levels. Experiments on ImageNet-512 and COCO show improved consistency and coverage over constant-guidance baselines.
\end{itemize}

%% file: 02_preliminaries.tex
\section{Preliminaries}
\label{sec:preliminaries}

\subsection{Diffusion Models and Classifier-Free Guidance (CFG)}
\label{subsec:diffusion_cfg}

We consider the variance exploding (VE) formulation of score-based diffusion models \citep{song2019generative,song2021scorebased}. Let
$(X_0,Y)\sim p_0(x_0,y)$ denote the data-label distribution over
$\mathbb R^D\times\mathcal Y$. The forward-noising process is
\begin{equation}
X_t = X_0+\sigma_t Z,
\qquad
Z\sim\mathcal N(0,I_D),
\qquad
Z\perp (X_0,Y),
\qquad
t\in[0,T],
\label{eq:ve_forward}
\end{equation}
where $\sigma_0=0$ and $\sigma_t$ is nondecreasing. Equivalently,
\begin{equation}
p_{t\mid 0}(x_t\mid x_0)
=
\mathcal N(x_t;x_0,\sigma_t^2 I_D).
\label{eq:ve_kernel}
\end{equation}
We denote by $p_t(x_t)$ and $p_t(x_t\mid y)$ the unconditional and conditional
marginals induced by this forward process at time step $t$.
The corresponding scores are given by
\begin{equation}
s_t^{\mathrm{un}}(x_t)
:=
\nabla_{x_t}\log p_t(x_t),
\qquad
s_t^{\mathrm{con}}(x_t,y)
:=
\nabla_{x_t}\log p_t(x_t\mid y).
\label{eq:population_scores}
\end{equation} 
Under VE parameterization, unconditional generation starts from
\(X_T\sim p_T\) and integrates the probability-flow ODE backward to \(t=0\)~\citep{song2021scorebased}:
\begin{equation}
\frac{dX_t}{dt}
=
-\sigma_t\dot{\sigma}_t\,s_t^{\mathrm{un}}(X_t),
\label{eq:ve_pf_ode}
\end{equation}
where $\dot{\sigma}_t=d\sigma_t/dt$. Similarly, the probability-flow ODE for conditional generation can be obtained by replacing
$s_t^{\mathrm{un}}$ with $s_t^{\mathrm{con}}$.

CFG \citep{ho2022classifierfree} strengthens
conditional generation by using a guidance weight $w\ge 0$. Motivated by the
classifier-guidance \citep{dhariwal2021diffusion}, at each fixed
noise level $t$, it can be interpreted as the tilted density
\begin{equation}
q_{t,\mathrm{tilt}}^w(x_t\mid y)
\propto
p_t(x_t)\,p_t(y\mid x_t)^w.
\label{eq:tilted_density}
\end{equation}
For \(w=1\), this reduces to the standard conditional distribution
\(p_t(x_t\mid y)\). Its score field recovers the standard CFG field
\begin{equation}
s_t^w(x_t,y)
:=
(1-w)s_t^{\mathrm{un}}(x_t)
+
w s_t^{\mathrm{con}}(x_t,y).
\label{eq:cfg_field}
\end{equation}
Substituting this guided field into the VE probability-flow ODE gives the CFG
sampling dynamics
\begin{equation}
\frac{dX_t}{dt}
=
-\sigma_t\dot{\sigma}_t\,s_t^w(X_t,y).
\label{eq:cfg_pf_ode}
\end{equation}

We denote by $q_t^w(x_t\mid y)$ the marginal distribution induced by the guided
ODE in \eqref{eq:cfg_pf_ode}.
For a time-dependent schedule \(\mathbf w=\{w_t\}_{t\in[0,T]}\), we replace the
constant guidance weight by \(w_t\), giving
\begin{equation}
s_t^{\mathbf w}(x_t,y)
=
(1-w_t)s_t^{\mathrm{un}}(x_t)
+
w_t s_t^{\mathrm{con}}(x_t,y).
\label{eq:cfg_schedule_field}
\end{equation}
We write \(q_t^{\mathbf w}(x_t\mid y)\) for the marginal distribution induced by
the guided probability-flow ODE with this time-dependent field. In particular, we use \(q_t^w\) to denote $q_t^{\mathbf w}$ when
\(w_t\equiv w\).

\subsection{How CFG Changes the Generated Distribution}
\label{subsec:cfg_not_clean_endpoint_truncation}

\begin{figure*}[t]
    \centering
    \includegraphics[width=\textwidth]{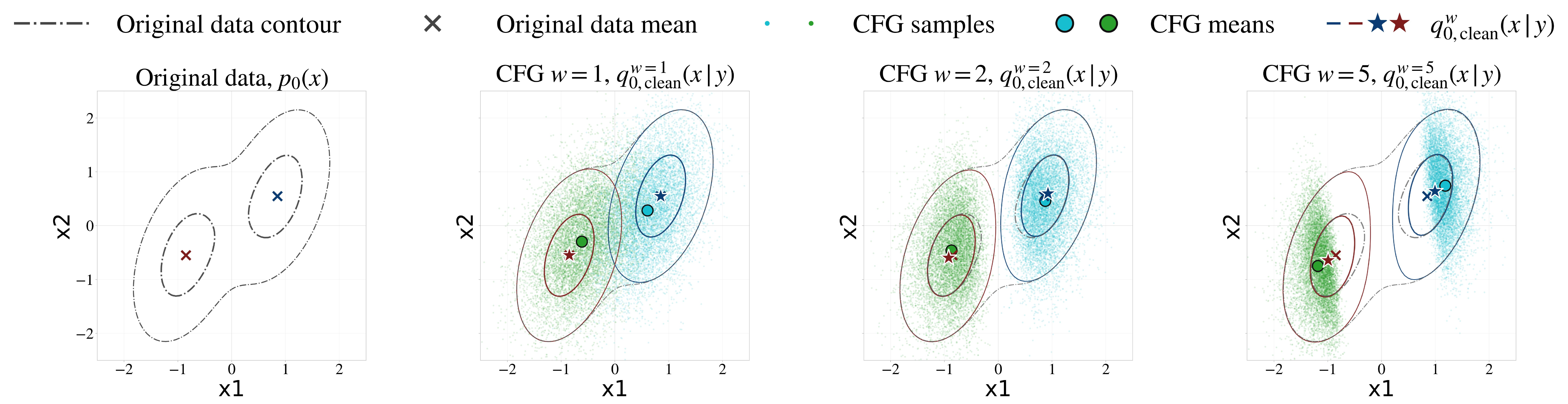}
    \caption{
    Mixture Gaussian visualization of CFG and clean endpoint tilting. The left panel shows the original data distribution \(p_0(x)\). In the remaining panels, colored dots are CFG-generated samples, circles are empirical CFG means, and stars are the means of the clean references \(q_{0,\mathrm{clean}}^w(x_0\mid y)\propto p_0(x_0)p_0(y\mid x_0)^w\). As \(w\) increases, the actual CFG samples separate and concentrate more aggressively than the clean references, illustrating that CFG does not simply implement direct endpoint reweighting.
    }
    \vspace{-1em}
    \label{fig:matched_comparison}
\end{figure*}



CFG induces a consistency-coverage trade-off in conditional generation: stronger guidance can improve alignment with the condition, but may reduce coverage by moving the generated distribution farther from the original data distribution \citep{ho2022classifierfree}. 
In particular, changing \(w\) does not simply make samples more condition-specific while keeping them near the original data distribution; it reshapes the generated distribution through the entire reverse trajectory.

To see this, we consider a two-dimensional Gaussian mixture example. Let
\(Y\in\{\pm 1\}\) with equal prior probability, and denote the original conditional
distributions as
\begin{equation}
p_0(x\mid y)
=
\mathcal N(x;\mu_y,\Sigma),
\qquad
x\in\mathbb R^2,
\qquad
\mu_1=(0.85,0.55),
\qquad
\mu_{-1}=-(0.85,0.55).
\label{eq:gaussian_toy_conditional}
\end{equation}
We apply the same forward noising process as in \eqref{eq:ve_forward}, then the noisy conditional distributions remain Gaussian $p_t(x_t\mid y)=\mathcal N(x_t;\mu_y,\Sigma+\sigma_t^2 I_2)$,
and both \(s_t^{\mathrm{un}}\) and \(s_t^{\mathrm{con}}\) can be computed
analytically. 


The tilted distribution in \eqref{eq:tilted_density} gives a useful reweighting interpretation of the CFG score field at each noisy level. Since samples are evaluated at the clean endpoint at $t=0$, a natural reference for the desired consistency-coverage trade-off is the following clean endpoint tilted distribution,
\begin{equation}
q_{0,\mathrm{clean}}^w(x\mid y)
\propto
p_0(x)p_0(y\mid x)^w .
\label{eq:gaussian_toy_clean_tilt}
\end{equation}
We then examine whether the actual generated distribution $q_0^w(x_0\mid y)$ matches either the original distribution $p_0(x_0\mid y)$ or the clean reference $q_{0,\mathrm{clean}}^w(x_0\mid y)$. 
Figure~\ref{fig:matched_comparison} shows that the CFG  distribution does not match $p_0(x_0\mid y)$. As $w$ increases, the CFG samples separate and concentrate more aggressively. With moderate guidance, this movement can bring the empirical CFG mean closer to the corresponding data component. With large guidance, such as $w=5$, the sample clouds become too narrow, and some samples move into low-probability regions under the original data distribution.

The stars in Figure~\ref{fig:matched_comparison} show the means of distribution $q_{0,\mathrm{clean}}^w(x_0\mid y)$. This clean reference mainly suppresses mass near the overlap between the two components, so its mean moves relatively mildly and remains closer to the true mean. In contrast, CFG modifies the guided score field along the noisy reverse trajectory. Thus, although both clean endpoint tilting and CFG make the conditional samples more separated as $w$ increases, $q_0^w(x_0\mid y)$ does not match $q_{0,\mathrm{clean}}^w(x_0\mid y)$.
This example motivates the distinction formalized next: fixed-time reweighting provides an interpretation of CFG, but it does not characterize the actual distribution induced by the guided reverse dynamics.
\subsection{Why Tuning CFG is Difficult}
\label{subsec:mismatch}

The Gaussian example highlights the mismatch between the clean reference $q_{0,\mathrm{clean}}^w(x_0\mid y)$ and the actual CFG distribution $q_0^w(x_0\mid y)$.
As shown in Figure~\ref{fig:matched_comparison}, in general,
\begin{equation}
q_0^w(x_0\mid y)
\neq
q_{0,\mathrm{clean}}^w(x_0\mid y).
\label{eq:terminal_cfg_clean_mismatch}
\end{equation}
The source of this mismatch
can be understood through two related facts: tilting before noising is 
different from tilting after noising, and the fixed-time tilted distribution
should not be identified with the  distribution induced by the guided
reverse dynamics. This is consistent with recent analyses showing that guided diffusion does \emph{not} generally sample from the tilted distribution suggested by the fixed-time guided score field
\citep{bradley2024classifier,chidambaram2024guidance,moufad2025cfgig}.
We provide the detailed derivation in Appendix~\ref{app:mismatch-proof}.



This mismatch makes tuning CFG difficult because a desired clean-time trade-off does not directly specify how the guidance schedule should be distributed over noise levels. The clean reference specifies what kind of output distribution we would like to obtain: samples should be compatible with the condition while remaining close to the original clean data distribution. However, CFG does not directly choose this reference distribution. It controls noisy-level score fields along the reverse trajectory, and the final distribution is the accumulated result of all guidance decisions
along that trajectory. Therefore, changing \(w_t\) at one noise level can affect the later trajectory and the clean-time trade-off in a nonlocal way.

A second difficulty is that the actual CFG distribution is implicit. Even for a fixed schedule \(\mathbf w\), the induced distributions $q_t^{\mathbf w}(x_t\mid y)$ and $q_0^{\mathbf w}(x_0\mid y)$ are not available as explicit densities. Thus, the desired consistency-coverage trade-off cannot be evaluated by directly computing endpoint density ratios such as
\(\log q_0^{\mathbf w}(x_0\mid y)-\log p_0(x_0)\). In practice, we only have samples generated along the reverse trajectory and score evaluations from the conditional and unconditional diffusion models.

%% file: 03_information_objective.tex
\section{An Information-Theoretic Analysis of Classifier-Free Guidance}
\label{sec:it_cfg}

The previous discussion suggests that a principled objective for guidance schedule optimization should satisfy two requirements. First, it should specify the desired trade-off between condition consistency and closeness to the original clean data distribution. Second, it should be evaluable under the actual CFG distribution induced by the sampler, even though this distribution is available only implicitly through generated trajectories and score functions.

In this section, we construct an objective for guidance schedule optimization. The objective uses a clean endpoint reference to specify the desired consistency-coverage trade-off and optimizes the actual sampler-induced distribution toward this reference. To make the objective estimable, we rewrite its components as trajectory-level expressions along the reverse sampler. This yields two terms: a consistency term measuring compatibility with the condition, and a coverage term measuring deviation from the original clean data distribution.


\subsection{Objective}
\label{subsec:terminal_clean_objective}

We first address how to specify the desired consistency-coverage trade-off. We use the clean reference as in~\eqref{eq:gaussian_toy_clean_tilt}, now indexed by a reference parameter \(\lambda\ge 0\):
\begin{equation}
q_{0,\mathrm{clean}}^\lambda(x_0\mid y)
\propto
p_0(x_0)p_0(y\mid x_0)^\lambda .
\label{eq:terminal_clean_tilt}
\end{equation}
Larger \(\lambda\) places more weight on samples that are consistent with the condition, while the factor \(p_0(x_0)\) anchors the reference to the original data distribution. The parameter $\lambda$ only defines this reference for evaluation; it does not affect the reverse sampler. In contrast, the guidance schedule $\mathbf w$ controls the sampling dynamics and induces the actual CFG distribution $q_0^{\mathbf w}(x_0\mid y)$.

The next step is to evaluate whether the guided sampler realizes this desired trade-off. Since we can sample trajectories from the actual CFG distribution, we compare the sampler-induced distribution to the clean reference:
\begin{equation}
\mathbb E_{p(y)}
\mathrm{KL}\!\left(
q_0^{\mathbf w}(x_0\mid y)
\,\|\,
q_{0,\mathrm{clean}}^\lambda(x_0\mid y)
\right).
\label{eq:clean_target_kl_objective}
\end{equation}
Note that we use the forward KL rather than the reverse KL because we do not have a sampler for the reference distribution $q_{0,\mathrm{clean}}^\lambda(x_0\mid y)$, which specifies the desired trade-off. In contrast, $q_0^{\mathbf w}(x_0\mid y)$ is the distribution actually produced by the guided sampler, so expectations under it can be estimated from generated trajectories.



The KL in \eqref{eq:clean_target_kl_objective} also decomposes into the two quantities we want to control.

\begin{lemma}
\label{lem:terminal_clean_objective}
For fixed \(\lambda\), minimizing $
\mathbb E_{p(y)}
\mathrm{KL}\!\left(
q_0^{\mathbf w}(x_0\mid y)
\,\|\,
q_{0,\mathrm{clean}}^\lambda(x_0\mid y)
\right)$
over guidance schedules \(w\) is equivalent to minimizing
\begin{align}
\mathcal L_{0}({\mathbf w};\lambda)
&:=
-\lambda
\underbrace{
\mathbb E_{p(y)q_0^{\mathbf w}(x_0\mid y)}
\left[
\log p_0(y\mid x_0)
\right]
}_{\text{consistency}} +
\underbrace{
\mathbb E_{p(y)}
\mathrm{KL}\!\left(
q_0^{\mathbf w}(x_0\mid y)
\,\|\,
p_0(x_0)
\right)
}_{\text{coverage}} .
\label{eq:terminal_cfg_objective}
\end{align}
\end{lemma}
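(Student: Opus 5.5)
The plan is to expand the KL divergence in \eqref{eq:clean_target_kl_objective} directly, using the explicit form of the clean reference \eqref{eq:terminal_clean_tilt}. Write
\[
q_{0,\mathrm{clean}}^\lambda(x_0\mid y)=\frac{p_0(x_0)\,p_0(y\mid x_0)^\lambda}{Z_\lambda(y)},
\qquad
Z_\lambda(y):=\int p_0(x_0)\,p_0(y\mid x_0)^\lambda\,dx_0 .
\]
For each fixed \(y\), substitute this into the definition of KL:
\[
\mathrm{KL}\!\left(q_0^{\mathbf w}(\cdot\mid y)\,\|\,q_{0,\mathrm{clean}}^\lambda(\cdot\mid y)\right)
=\mathbb E_{q_0^{\mathbf w}(x_0\mid y)}\!\left[\log\frac{q_0^{\mathbf w}(x_0\mid y)}{p_0(x_0)}\right]
-\lambda\,\mathbb E_{q_0^{\mathbf w}(x_0\mid y)}\!\left[\log p_0(y\mid x_0)\right]
+\log Z_\lambda(y).
\]
The first term is exactly \(\mathrm{KL}(q_0^{\mathbf w}(x_0\mid y)\,\|\,p_0(x_0))\). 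Taking the expectation over \(p(y)\) then gives
\[
\mathcal L_0(\mathbf w;\lambda)+\mathbb E_{p(y)}\log Z_\lambda(y).
\]

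The key observation is that \(Z_\lambda(y)\) depends only on \(p_0\), \(y\), and \(\lambda\). It does not depend on the guidance schedule \(\mathbf w\), since \(\lambda\) only defines the reference and never enters the sampler. Therefore \(\mathbb E_{p(y)}\log Z_\lambda(y)\) is an additive constant in \(\mathbf w\). The two objectives differ by this constant and so have the same minimizers, which is the claimed equivalence.

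The only real obstacle is making sure the terms are well defined, so that the splitting of the logarithm is legitimate. This needs the following:
\begin{itemize}
\item \(0<Z_\lambda(y)<\infty\). For finite \(\mathcal Y\), \(p_0(y\mid x_0)\le 1\) gives \(Z_\lambda(y)\le 1\), which handles the upper bound.
\item \(\mathbb E_{p(y)}|\log Z_\lambda(y)|<\infty\).
\item \(q_0^{\mathbf w}(\cdot\mid y)\ll p_0\), and the consistency expectation is finite.
\end{itemize}
I would state these as standing regularity assumptions. On schedules where they fail, both objectives are \(+\infty\) simultaneously, so the equivalence is unaffected.
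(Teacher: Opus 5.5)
Your proposal is correct and matches the paper's proof: substitute the normalized clean reference $q_{0,\mathrm{clean}}^\lambda = p_0(x_0)p_0(y\mid x_0)^\lambda/Z_\lambda(y)$ into the KL, split the logarithm into the coverage KL, the $-\lambda$-weighted consistency term and $\mathbb E_{p(y)}\log Z_\lambda(y)$, and observe that the last term does not depend on $\mathbf w$. Your regularity discussion is more careful than the paper, which simply assumes positive densities and finite expectations; your claim that both objectives become $+\infty$ together is justified for finite $\mathcal Y$, where $\log p_0(y\mid x_0)\le 0$.
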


The detailed proof is provided in Appendix~\ref{app:objective-proof}.  The decomposition in \eqref{eq:terminal_cfg_objective} gives us exactly the two terms needed for our desired trade-off: a consistency term and a coverage KL. Figure~\ref{fig:gaussian_tradeoff_target} in Appendix~\ref{app:gaussian-toy} illustrates these two information quantities used in our objective for the Gaussian example discussed in~\cref{subsec:cfg_not_clean_endpoint_truncation}. In particular, increasing $w$ in CFG improves consistency but also increases the coverage KL, illustrating the trade-off induced by constant guidance. In contrast, varying $\lambda$ in the clean reference yields a better trade-off: at a similar consistency level, the clean reference remains closer to the original data distribution than the actual CFG distribution.

Moreover, as shown in the following two subsections, both terms can be estimated along the guided probability-flow trajectory from generated samples and score evaluations. This computability is the main reason we use this KL objective
rather than other divergences. The trajectory estimators follow the same spirit as de Bruijn-type identities and probability-flow likelihood calculations for score-based diffusion models \citep{cover2006elements,song2021maximum,song2021scorebased}.

\subsection{Consistency Term}
\label{subsec:consistency_term}


The consistency term rewards samples that are compatible with the condition $y$ under the original data distribution. Since it is difficult to evaluate directly at $t=0$, the following lemma expresses this quantity along the guided probability-flow trajectory.

\begin{lemma}
\label{lem:consistency_path_identity}
Under the VE probability-flow dynamics in \eqref{eq:cfg_pf_ode}, for any
\(t\in[0,T]\),
\begin{align}
&\mathbb E_{p(y)q_t^{\mathbf w}(x_t\mid y)}
\left[
\log p_t(y\mid x_t)
\right]
=
\mathbb E_{p(y)q_T^{\mathbf w}(x_T\mid y)}
\left[
\log p_T(y\mid x_T)
\right]
\\
&\ 
-
\int_t^T \!\!
\sigma_s\dot{\sigma}_s\,
\mathbb E_{p(y)q_s^{\mathbf w}(x_s\mid y)}
\left[
\nabla_x\!\cdot s_s^{\mathrm{diff}}(x_s,y)
+
\left\langle
s_s^{\mathrm{diff}}(x_s,y),
s_s^{\mathrm{con}}(x_s,y)
\right\rangle
-
w_s
\left\|
s_s^{\mathrm{diff}}(x_s,y)
\right\|^2
\right]ds, \nonumber
\label{eq:consistency_path_identity}
\end{align}
where $
s_t^{\mathrm{diff}}(x_t,y) := s_t^{\mathrm{con}}(x_t,y) - s_t^{\mathrm{un}}(x_t).$
\end{lemma}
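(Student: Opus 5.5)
The plan is to differentiate the scalar function \(t\mapsto \mathbb E_{p(y)q_t^{\mathbf w}(x_t\mid y)}[\log p_t(y\mid x_t)]\) in time and then integrate from \(t\) to \(T\). I will compute this derivative along trajectories: because \(q_t^{\mathbf w}(\cdot\mid y)\) is the pushforward of \(q_T^{\mathbf w}(\cdot\mid y)\) under the deterministic flow of \eqref{eq:cfg_pf_ode} with the scheduled field \eqref{eq:cfg_schedule_field}, we have \(\mathbb E_{q_t^{\mathbf w}}[f_t(X_t,y)]=\mathbb E[f_t(X_t,y)]\) for the ODE solution \(X_t\) started from \(X_T\sim q_T^{\mathbf w}(\cdot\mid y)\). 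The chain rule then gives
\[
\frac{d}{dt}\mathbb E[f_t(X_t,y)]=\mathbb E\Big[\partial_t f_t(X_t,y)+\big\langle \nabla_x f_t(X_t,y),\,-\sigma_t\dot\sigma_t\, s_t^{\mathbf w}(X_t,y)\big\rangle\Big],
\]
with \(f_t(x,y)=\log p_t(y\mid x)\).

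First I will rewrite \(f_t\) via Bayes' rule as \(f_t(x,y)=\log p_t(x\mid y)-\log p_t(x)+\log p(y)\). Since the forward noise is independent of \(Y\), the label marginal \(p(y)\) does not depend on \(t\). Consequently \(\nabla_x f_t=s_t^{\mathrm{diff}}\). For the explicit time derivative I will use that, under the VE forward process \eqref{eq:ve_forward}, both \(p_t(x)\) and \(p_t(x\mid y)\) solve the heat equation \(\partial_t p=\sigma_t\dot\sigma_t\,\Delta p\); this holds because \(p_t\) is a Gaussian convolution with variance \(\sigma_t^2\). Dividing by \(p\) yields
\[
\partial_t\log p=\sigma_t\dot\sigma_t\big(\nabla\!\cdot\nabla\log p+\|\nabla\log p\|^2\big),
\]
and therefore \(\partial_t f_t=\sigma_t\dot\sigma_t\big(\nabla\!\cdot s_t^{\mathrm{diff}}+\|s_t^{\mathrm{con}}\|^2-\|s_t^{\mathrm{un}}\|^2\big)\).

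Next comes the algebraic step. I will write \(\|s^{\mathrm{con}}\|^2-\|s^{\mathrm{un}}\|^2=\langle s^{\mathrm{diff}},s^{\mathrm{con}}+s^{\mathrm{un}}\rangle\) and \(s_t^{\mathbf w}=s^{\mathrm{un}}+w_t s^{\mathrm{diff}}\). The transport term \(-\langle s^{\mathrm{diff}},s^{\mathrm{un}}+w_t s^{\mathrm{diff}}\rangle\) then cancels the \(s^{\mathrm{un}}\) part, and the derivative becomes
\[
\sigma_t\dot\sigma_t\,\mathbb E\big[\nabla\!\cdot s_t^{\mathrm{diff}}+\langle s_t^{\mathrm{diff}},s_t^{\mathrm{con}}\rangle-w_t\|s_t^{\mathrm{diff}}\|^2\big].
\]
Finally I take the expectation over \(p(y)\) and integrate from \(t\) to \(T\). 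The value at \(t\) equals the value at \(T\) minus \(\int_t^T\) of the derivative, which is exactly the claimed identity with its minus sign.

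I expect the main obstacle to be regularity rather than the algebra. The differentiation under the expectation and the heat-equation identity for \(\log p_t\) require smooth, positive densities with enough integrability. These properties hold for \(s>0\) because of the Gaussian smoothing, but they may fail at \(s=0\), where \(\sigma_0=0\). I would handle this by proving the identity on \([t',T]\) with \(t'>t\ge0\), or on \([\epsilon,T]\) when \(t=0\), and then passing to the limit by continuity, while stating mild integrability assumptions on the scores.
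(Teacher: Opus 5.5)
Your proposal is correct and follows essentially the same route as the paper's proof in Appendix~\ref{app:consistency-proof}. Both arguments use Bayes' rule to get $\nabla_x \log p_t(y\mid x_t)=s_t^{\mathrm{diff}}$, the VE heat equation to compute $\partial_t\log p_t(y\mid x_t)$, the transport identity along the guided flow, the same cancellation via $s_t^{\mathbf w}=s_t^{\mathrm{un}}+w_t s_t^{\mathrm{diff}}$, and integration from $t$ to $T$. The only differences are that you apply the chain rule along the ODE flow (Lagrangian form) where the paper uses the continuity equation for $q_t^{\mathbf w}$ (Eulerian form), and that your limiting argument at the $\sigma_0=0$ endpoint is extra care the paper omits.
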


Setting \(t=0\) in \cref{lem:consistency_path_identity} gives the consistency term in \eqref{eq:terminal_cfg_objective}. The identity has a
de Bruijn-type \citep{cover2006elements,song2021maximum} interpretation:  it tracks how the noisy consistency
\(\mathbb E[\log p_t(y\mid x_t)]\) changes along the probability-flow
trajectory, instead of evaluating the quantity
directly. The integral captures two effects: the divergence term measures local expansion or contraction of the condition-dependent score direction, while the inner-product term measures its interaction with the guided score field. A detailed derivation is
given in Appendix~\ref{app:consistency-proof}.

In practice, for a fixed schedule \(\mathbf w\), we generate trajectories from
the guided sampler and evaluate the integrand in Lemma~
\ref{lem:consistency_path_identity}. The divergence
\(\nabla_x\!\cdot s_t^{\mathrm{diff}}\) can be estimated with Hutchinson trace estimators \citep{hutchinson1990stochastic,grathwohl2019ffjord},
and the inner-product term only requires evaluations of
\(s_t^{\mathrm{un}}\), \(s_t^{\mathrm{con}}\), and \(s_t^{\mathrm{diff}}\), which are available in CFG diffusion models.

\subsection{Coverage Term}
\label{subsec:kl_term}

The coverage KL acts as a deviation cost in the objective: a smaller value means that the guided distribution remains closer to the original clean data distribution $p_0(x_0)$. As with the consistency term, we avoid direct estimation by rewriting this quantity as an integral along the guided probability-flow trajectory. The following lemma allows us to estimate the coverage KL without directly estimating the clean densities
\(q_0^{\mathbf w}(x_0\mid y)\) or \(p_0(x_0)\).

\begin{lemma}
\label{lem:coverage_kl_identity}
Under the VE probability-flow dynamics in \eqref{eq:cfg_pf_ode}, the coverage KL
satisfies
\begin{equation}
\begin{aligned}
&\mathbb E_{p(y)}
\mathrm{KL}\!\left(
q_0^{\mathbf w}(x_0\mid y)
\,\|\,
p_0(x_0)
\right)
=
\mathbb E_{p(y)}
\mathrm{KL}\!\left(
q_T^{\mathbf w}(x_T\mid y)
\,\|\,
p_T(x_T)
\right)
\\
&\quad
-
\int_0^T
\sigma_t\dot{\sigma}_t\,
\mathbb E_{p(y)q_t^{\mathbf w}(x_t\mid y)}
w_t
\left[
\nabla_x\!\cdot s_t^{\mathrm{diff}}(x_t,y)
+
\left\langle
s_t^{\mathrm{un}}(x_t),
s_t^{\mathrm{diff}}(x_t,y)
\right\rangle
\right]dt .
\end{aligned}
\label{eq:coverage_kl_identity}
\end{equation}
\end{lemma}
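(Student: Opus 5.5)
The plan is to differentiate the KL along time and compare the two continuity equations that govern $q_t^{\mathbf w}(\cdot\mid y)$ and $p_t$, then integrate from $0$ to $T$. Fix $y$ and treat $\mathrm{KL}(q_t^{\mathbf w}(\cdot\mid y)\,\|\,p_t)$ as a function of $t$. At the end, take $\mathbb E_{p(y)}$ and exchange expectation with the time integral.

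\textbf{Step 1 (two continuity equations).} The guided probability-flow ODE \eqref{eq:cfg_pf_ode}, with $w$ replaced by $w_t$, is deterministic and can be run in either time direction. Its marginals $q_t^{\mathbf w}$ therefore satisfy, in forward time $t$,
\[
\partial_t q_t = -\nabla\cdot\big(q_t v^q_t\big), \qquad v^q_t = -\sigma_t\dot\sigma_t\, s_t^{\mathbf w}(x,y).
\]
For the reference, the VE forward process \eqref{eq:ve_forward} gives the heat equation $\partial_t p_t = \sigma_t\dot\sigma_t\,\Delta p_t$. Since $\Delta p_t = \nabla\cdot(p_t s^{\mathrm{un}}_t)$, this is a continuity equation with velocity
\[
v^p_t = -\sigma_t\dot\sigma_t\, s_t^{\mathrm{un}},
\]
which is exactly the velocity of the unconditional probability-flow ODE \eqref{eq:ve_pf_ode}. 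By \eqref{eq:cfg_schedule_field}, the velocity difference is
\[
v^q_t - v^p_t = -\sigma_t\dot\sigma_t\, w_t\, s_t^{\mathrm{diff}}.
\]

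\textbf{Step 2 (time derivative of KL).} I will use the standard identity for two densities transported by continuity equations:
\[
\frac{d}{dt}\mathrm{KL}(q_t\|p_t) = \mathbb E_{q_t}\big[\langle v^q_t - v^p_t,\ \nabla\log q_t - \nabla\log p_t\rangle\big].
\]
It follows by differentiating $\int q\log(q/p)$, using $\int\partial_t q = 0$, substituting the two continuity equations, and integrating by parts once. Plugging in the velocity difference gives
\[
\frac{d}{dt}\mathrm{KL} = -\sigma_t\dot\sigma_t\, w_t\, \mathbb E_{q_t}\big[\langle s^{\mathrm{diff}}_t, \nabla\log q_t\rangle - \langle s^{\mathrm{diff}}_t, s^{\mathrm{un}}_t\rangle\big].
\]
The unknown score of $q_t$ is then removed by one more integration by parts:
\[
\mathbb E_{q_t}\langle s^{\mathrm{diff}}_t,\nabla\log q_t\rangle = \int\langle s^{\mathrm{diff}}_t,\nabla q_t\rangle\,dx = -\mathbb E_{q_t}\big[\nabla\cdot s^{\mathrm{diff}}_t\big].
\]
Hence
\[
\frac{d}{dt}\mathrm{KL} = \sigma_t\dot\sigma_t\, w_t\, \mathbb E_{q_t}\big[\nabla\cdot s^{\mathrm{diff}}_t + \langle s^{\mathrm{un}}_t, s^{\mathrm{diff}}_t\rangle\big].
\]

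\textbf{Step 3 (integrate).} Integrating over $[0,T]$ gives $\mathrm{KL}_T - \mathrm{KL}_0 = \int_0^T(\cdots)\,dt$. Rearranging and averaging over $y$ yields \eqref{eq:coverage_kl_identity}, with the sign exactly as stated.

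The main obstacle is rigor rather than algebra. Two integrations by parts need vanishing boundary terms, which requires enough decay of $q_t$, $p_t$ and the scores at infinity. Exchanging $d/dt$ with the integral needs integrability. The behavior near $t=0$ is delicate: $p_0$ may be singular (data on a manifold), so both $\mathrm{KL}_0$ and the integrand may blow up. I would state smoothness, positivity and decay assumptions (e.g.\ $p_0$ with a smooth positive density and finite moments, Lipschitz scores). Under these I would prove the identity on $[\epsilon,T]$ and pass to the limit $\epsilon\to 0$ by continuity of KL along the flow, assuming the terminal KL is finite.
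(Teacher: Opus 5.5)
Your proposal is correct and follows essentially the paper's argument: both treat $p_t$ as transported by the unconditional probability-flow velocity $-\sigma_t\dot\sigma_t s_t^{\mathrm{un}}$, differentiate the KL along the two continuity equations using the velocity difference $-\sigma_t\dot\sigma_t w_t s_t^{\mathrm{diff}}$, and integrate over $[0,T]$. The only difference is cosmetic. The paper uses the along-trajectory identity $\frac{d}{dt}\log q_t^{\mathbf w}(X_t)=-\nabla\cdot b_t^{\mathbf w}$, so the score of $q_t^{\mathbf w}$ never appears. You instead pass through $\nabla\log q_t$ and remove it with one extra integration by parts. Your regularity discussion near $t=0$ goes beyond what the paper states.
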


With the standard initialization \(X_T\sim p_T\) in generation, the first KL term on the right-hand side is zero. We keep it in the identity to make the probability-flow relation explicit.  Detailed derivations are given in Appendix~\ref{app:coverage-proof}.
The coverage-side KL is not itself a diversity measure; rather, it is the deviation cost in the objective. In Appendix~\ref{app:coverage-proof}, we decompose this KL into entropy and cross-entropy terms. The conditional entropy $H_{q_0^{\mathbf w}}(X_0\mid Y)$ appears in this decomposition and is used as a diversity quantity in the empirical analysis.


%% file: 04_algorithm.tex
\section{Guidance Schedule Optimization Algorithm}
\label{sec:guidance_schedule_algorithm}

We now use the trajectory objective to learn a time-dependent guidance schedule. The objective derived above is written in continuous noise time \(t\in[0,T]\), while practical samplers run on a finite reverse-time grid $ T=t_K>\cdots>t_1\approx 0$. We therefore optimize one guidance value per sampling step. Let $\mathbf w=(w_1,\ldots,w_K)$ denote the discrete schedule, where $w_k := w(t_k)$, $k$ indexes the discrete sampler step, and $t_k$ is the corresponding time step.

Given a batch of conditions \(\{y^{(b)}\}_{b=1}^B\), we generate trajectories
\(\{x_{t_k}^{(b)}\}_{k=1,b=1}^{K,B}\) using the current schedule. We approximate the trajectory objective by the Riemann sum
\begin{align}
\widehat{\mathcal L}_{0}(\mathbf w;\lambda)
&=
\sum_{k=1}^{K}
\frac{a_k}{B}
\sum_{b=1}^{B}
\Bigg[
(\lambda-w_k)
\Big(
\nabla_x\!\cdot
s_{t_k}^{\mathrm{diff}}(x_{t_k}^{(b)},y^{(b)})
+
\left\langle
s_{t_k}^{\mathrm{diff}}(x_{t_k}^{(b)},y^{(b)}),
s_{t_k}^{\mathrm{con}}(x_{t_k}^{(b)},y^{(b)})
\right\rangle
\Big)
\nonumber\\
&\hspace{3.2cm}
+
w_k(1-\lambda)
\left\|
s_{t_k}^{\mathrm{diff}}(x_{t_k}^{(b)},y^{(b)})
\right\|^2
\Bigg].
\label{eq:discrete_schedule_loss}
\end{align}

Here \(a_k\) is the quadrature weight corresponding to
\(\sigma_t\dot{\sigma}_t\,dt\) on the sampling grid. We omit additive terms that
do not depend on the schedule \(\mathbf w\), since they do not affect schedule optimization. The
derivation of \eqref{eq:discrete_schedule_loss} from the objective is provided in Appendix~\ref{app:loss-implementation}.

We optimize \(\mathbf w\) with a simple SGD-like iterative procedure. Let \(\mathbf w^{(m)}\) denote the schedule at optimization iteration \(m\). Each iteration has three steps: generate trajectories under the current schedule, compute a stochastic proposal direction from the discrete objective, and resample trajectories to evaluate the proposed schedule.

At iteration \(m\), we first sample a batch of trajectories using
\(\mathbf w^{(m)}\). Holding these trajectories fixed, we differentiate
\eqref{eq:discrete_schedule_loss} with respect to the explicit schedule
variables \(w_k\) and form the projected proposal
\[
\widetilde w_k
=
\operatorname{clip}_{[w_{\min},w_{\max}]}
\left(
w_k^{(m)}
-
\alpha
\frac{\partial \widehat{\mathcal L}_{0}}{\partial w_k}
\right),
\qquad
k=1,\ldots, K.
\]
This derivative is used only to form a candidate update rather than as the
exact gradient of the population objective, because changing \(\mathbf w\) also
changes the trajectory distribution generated by the sampler.
After forming the proposed schedule \(\widetilde{\mathbf w}\), we generate fresh
trajectories under \(\widetilde{\mathbf w}\) and re-estimate the objective. We
keep the update only when this resampled loss decreases; otherwise, we keep the
previous schedule, i.e.,
\(\mathbf w^{(m+1)}=\widetilde{\mathbf w}\) if the loss decreases and
\(\mathbf w^{(m+1)}=\mathbf w^{(m)}\) otherwise. Thus, the procedure is similar
to SGD because it uses a batch-based descent direction, but each update is
checked by resampling trajectories under the new schedule.


%% file: 05_experiments.tex
\section{Experiments}
\label{sec:experiments}
\subsection{Experimental Setup}
We evaluate the proposed schedule optimization algorithm on EDM-XXL \citep{karras2022elucidating} for class-conditional ImageNet-512 and SD-XL \citep{podell2023sdxl} for text-conditional COCO. For each reference parameter \(\lambda\), we optimize a time-dependent schedule initialized from \(w_t=1\), then fix it for generation
and evaluation. 

\textbf{Metrics.}
We report FID \citep{heusel2017gans} as an overall sample-quality metric in both
settings. For ImageNet-512, generated samples can be directly compared with the
real ImageNet distribution, so we use precision and recall
\citep{kynkaanniemi2019improved} to measure sample quality and distributional
coverage. For COCO text-to-image generation, the generated image must align with the prompt, which is not captured by distributional metrics alone. We therefore use CLIP score \citep{radford2021learning} to measure
text-image consistency. To measure variation within each prompt, we compute
LPIPS \citep{zhang2018unreasonable} across multiple samples generated from the
same prompt.
\begin{figure*}[t]
    \centering

    \begin{subfigure}[t]{0.48\textwidth}
        \centering
        \includegraphics[width=\linewidth]{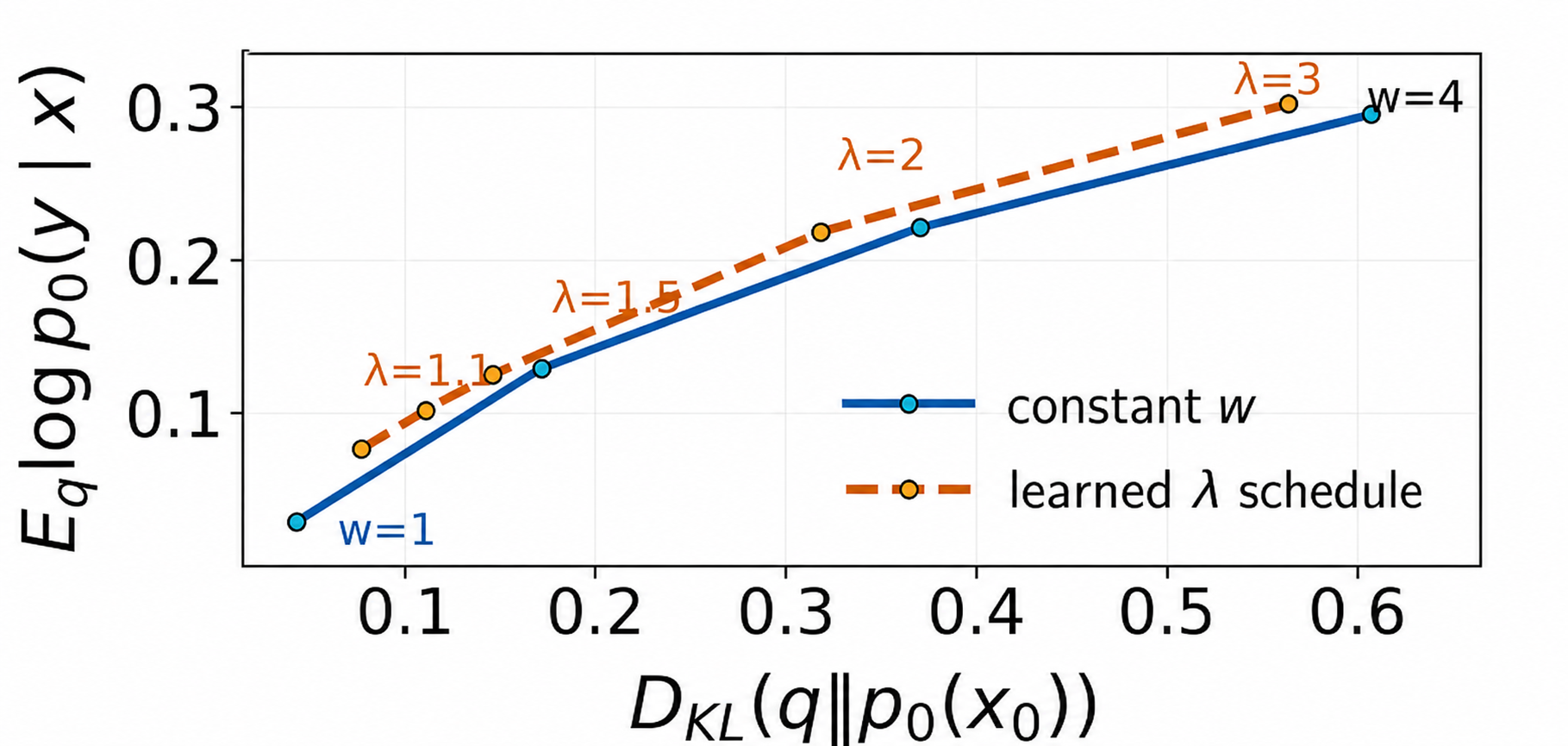}
        \caption{Consistency--coverage trade-off.}
        \label{fig:edm_internal_tradeoff}
    \end{subfigure}
    \hfill
    \begin{subfigure}[t]{0.48\textwidth}
        \centering
        \includegraphics[width=\linewidth]{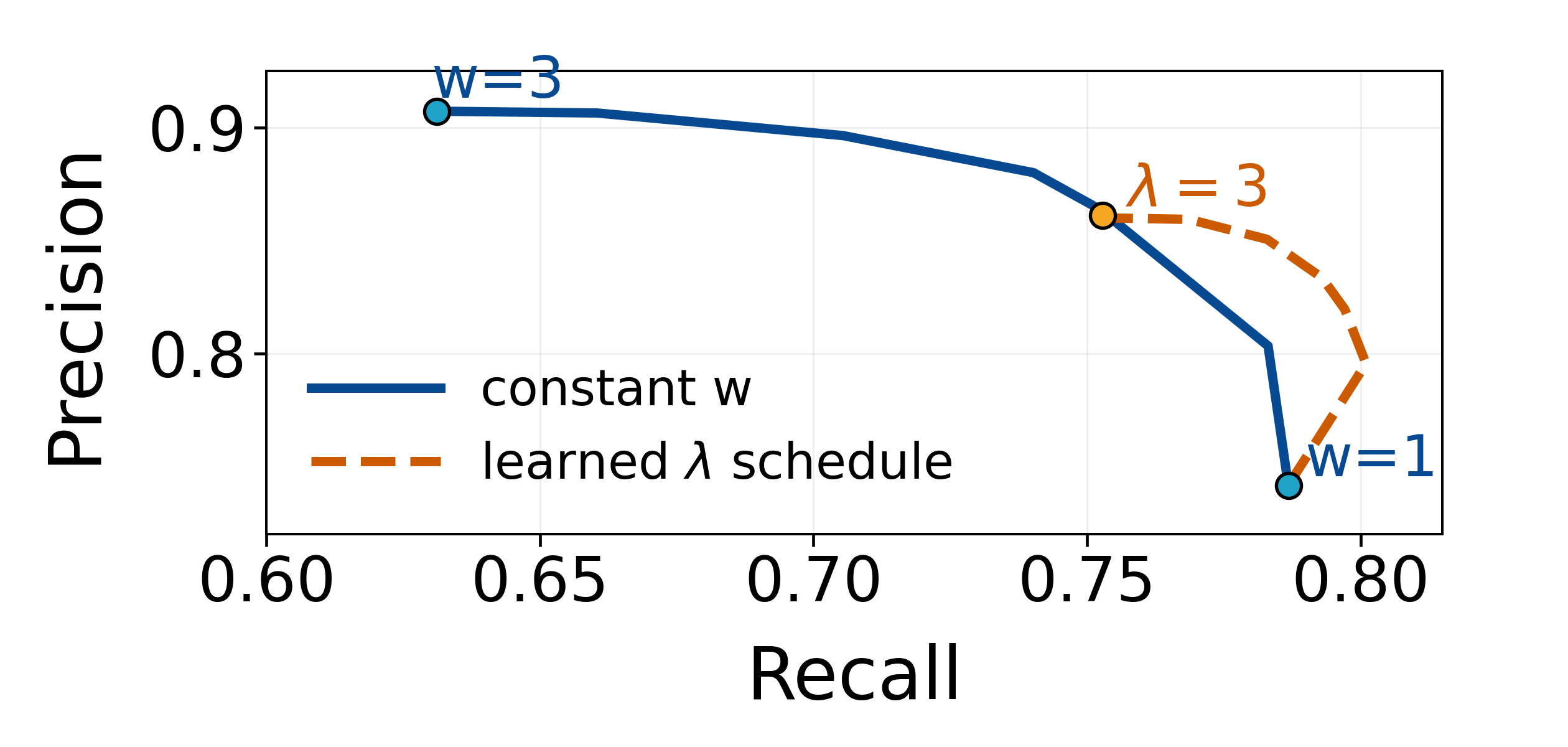}
        \caption{Precision--recall trade-off on ImageNet.}
        \label{fig:edm_precision_recall}
    \end{subfigure}

    \caption{
    Trade-offs on EDM-XXL. Left: consistency--coverage trade-off under constant
    guidance and learned schedules. Right: precision--recall comparison between
    constant guidance, interval guidance, and learned schedules.
    }
    \vspace{-1.5em}
    \label{fig:tradeoff_analysis}
\end{figure*}

\textbf{Implementation details.}
We use the same sampler, step count, and random seeds for all methods in each setting: Euler sampling with 32 steps on ImageNet-512 and 40 steps on COCO. ImageNet metrics are computed with 50,000 generated images; COCO FID and CLIP use 5K images, and LPIPS uses 125 prompts with 20 images per
prompt.  Additional details are in Appendix~\ref{app:implementation_details}.

\begin{table*}[t]
\caption{
COCO results with SD-XL. Columns are grouped by all-step mean guidance strength
\(\bar w=5,7,9\). For our adaptive schedules, the corresponding columns use
\(\lambda=5,7,9\). FID, CLIP, and LPIPS measure image quality, text-image
consistency, and within-prompt variation, respectively.
}
\label{tab:coco_results}
\centering
\setlength{\tabcolsep}{4.5pt}
\renewcommand{\arraystretch}{1.15}
\resizebox{\textwidth}{!}{%
\begin{tabular}{l|cccc|cccc|cccc}
\hline
&
\multicolumn{4}{c|}{\textbf{\(\bar w=5\)}}
& \multicolumn{4}{c|}{\textbf{\(\bar w=7\)}}
& \multicolumn{4}{c}{\textbf{\(\bar w=9\)}} \\
\cline{2-13}

\textbf{Metric}
& constant & interval & \(\beta\)-CFG & \(\lambda=5\)
& constant & interval & \(\beta\)-CFG & \(\lambda=7\)
& constant & interval & \(\beta\)-CFG & \(\lambda=9\) \\
\hline

\textbf{FID $\downarrow$}
& 24.04 & 24.39 & \textbf{23.39} & 24.04
& 25.08 & 25.51 & 24.76 & \textbf{24.69}
& 26.07 & 26.80 & 25.42 & \textbf{25.38} \\

\textbf{CLIP $\uparrow$}
& 0.3570 & 0.3526 & 0.3536 & \textbf{0.3578}
& 0.3610 & 0.3557 & 0.3578 & \textbf{0.3614}
& 0.3616 & 0.3574 & 0.3600 & \textbf{0.3620} \\

\textbf{LPIPS $\uparrow$}
& 0.5961 & \textbf{0.6148} & 0.6082 & 0.6065
& 0.6011 & \textbf{0.6120} & 0.6112 & 0.6020
& 0.6051 & \textbf{0.6168} & 0.6125 & 0.6092 \\

\hline
\end{tabular}
}
\end{table*}

\subsection{Quantitative Results}
We first evaluate whether the learned schedules improve the
consistency-coverage trade-off. Figure~\ref{fig:edm_internal_tradeoff} evaluates the learned schedules using the consistency-coverage objective optimized by our algorithm. The learned schedules improve the objective trade-off over constant guidance, showing that the algorithm effectively optimizes the criterion used to learn $\mathbf w$. On ImageNet-512, as shown in Figure~\ref{fig:edm_precision_recall}, increasing constant guidance \(w\) improves precision but reduces recall. Our adaptive schedules move to better frontiers on this trade-off, improving recall at comparable precision levels. The agreement between the consistency--coverage improvement and the precision--recall improvement shows that the objective captures a meaningful trade-off for generation. The ImageNet metric table is provided in Appendix Table~\ref{tab:imagenet_results}. Our best adaptive schedule reaches FID \(1.45\), which is competitive with the
hand-designed interval guidance baseline at FID \(1.40\) \citep{kynkaanniemi2024applying}, and substantially better than the best constant guidance baseline at FID \(1.83\), while maintaining a strong F-score.

Table~\ref{tab:coco_results} reports the COCO results with SD-XL. Because different schedules induce different trade-offs, we compare methods under three matched guidance settings, corresponding \(5\), \(7\), and \(9\), which cover the strong-performing range for constant guidance on COCO. For constant guidance, interval guidance \citep{kynkaanniemi2024applying}, and $\beta$-CFG \citep{malarz2025classifier}, we choose schedules within each group to have comparable mean guidance. For our method, \(\lambda\) is not a guidance weight; it is the reference parameter used to learn the adaptive schedule. We report the schedules learned with \(\lambda=5,7,9\), whose resulting mean guidance falls into the corresponding matched groups. The exact grouping rule and schedule curves used for these matched settings are shown in Appendix~\ref{app:implementation_details}, Figure~\ref{fig:schedule_comparison_appendix}.

The main trend is that our adaptive schedules improve the FID--CLIP trade-off over constant guidance. They achieve the highest CLIP score in all three groups, and obtain the best FID at mean guidance \(7\) and \(9\), while staying close to the best FID at mean guidance \(5\). Compared with constant guidance, our schedule also increases LPIPS, suggesting that the improvements in FID and CLIP are not accompanied by reduced within-prompt variation as measured by LPIPS.

The interval baseline gives a useful contrast. It often achieves the largest
LPIPS, but with lower CLIP consistency, showing that changing where guidance is
applied can strongly affect the balance among quality, consistency, and variation. This motivates the schedule analysis below, where we examine how the
learned schedules allocate guidance across high-, middle-, and low-noise stages.

\begin{figure*}[t]
    \centering

    \begin{subfigure}[t]{0.48\textwidth}
        \centering
        \includegraphics[width=\linewidth]{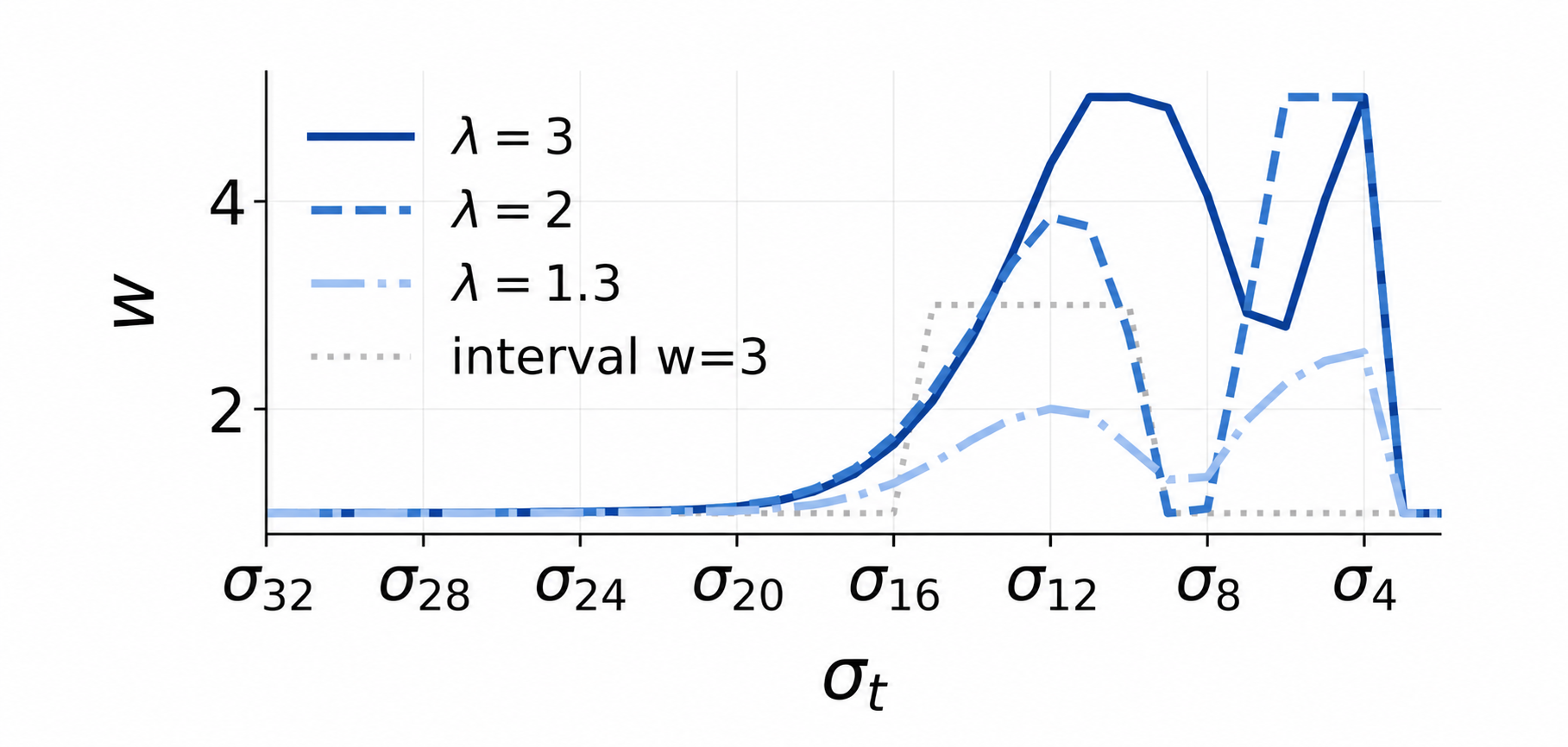}
        \caption{EDM-XXL.}
        \label{fig:edm_schedule}
    \end{subfigure}
    \hfill
    \begin{subfigure}[t]{0.48\textwidth}
        \centering
        \includegraphics[width=\linewidth]{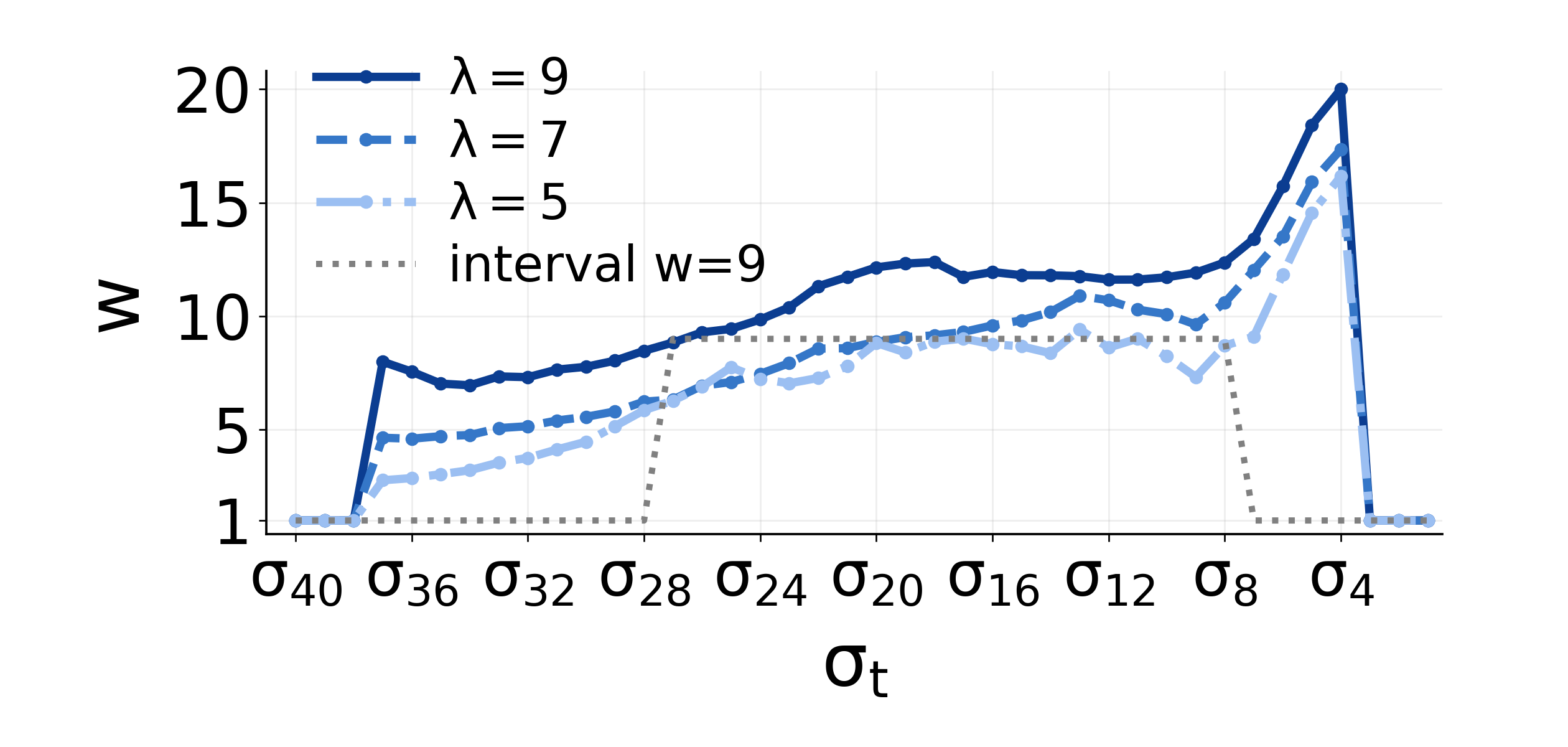}
        \caption{SD-XL.}
        \label{fig:sdxl_schedule}
    \end{subfigure}

    \caption{
    Learned guidance schedules under different reference parameters
    \(\lambda\). Left: schedules learned for EDM-XXL on ImageNet-512. Right:
    schedules learned for SD-XL on COCO. 
    }
    \label{fig:learned_schedules}
\end{figure*}
\begin{figure*}[t]
    \centering

    \begin{subfigure}[t]{0.49\textwidth}
        \centering
        \includegraphics[width=\linewidth]{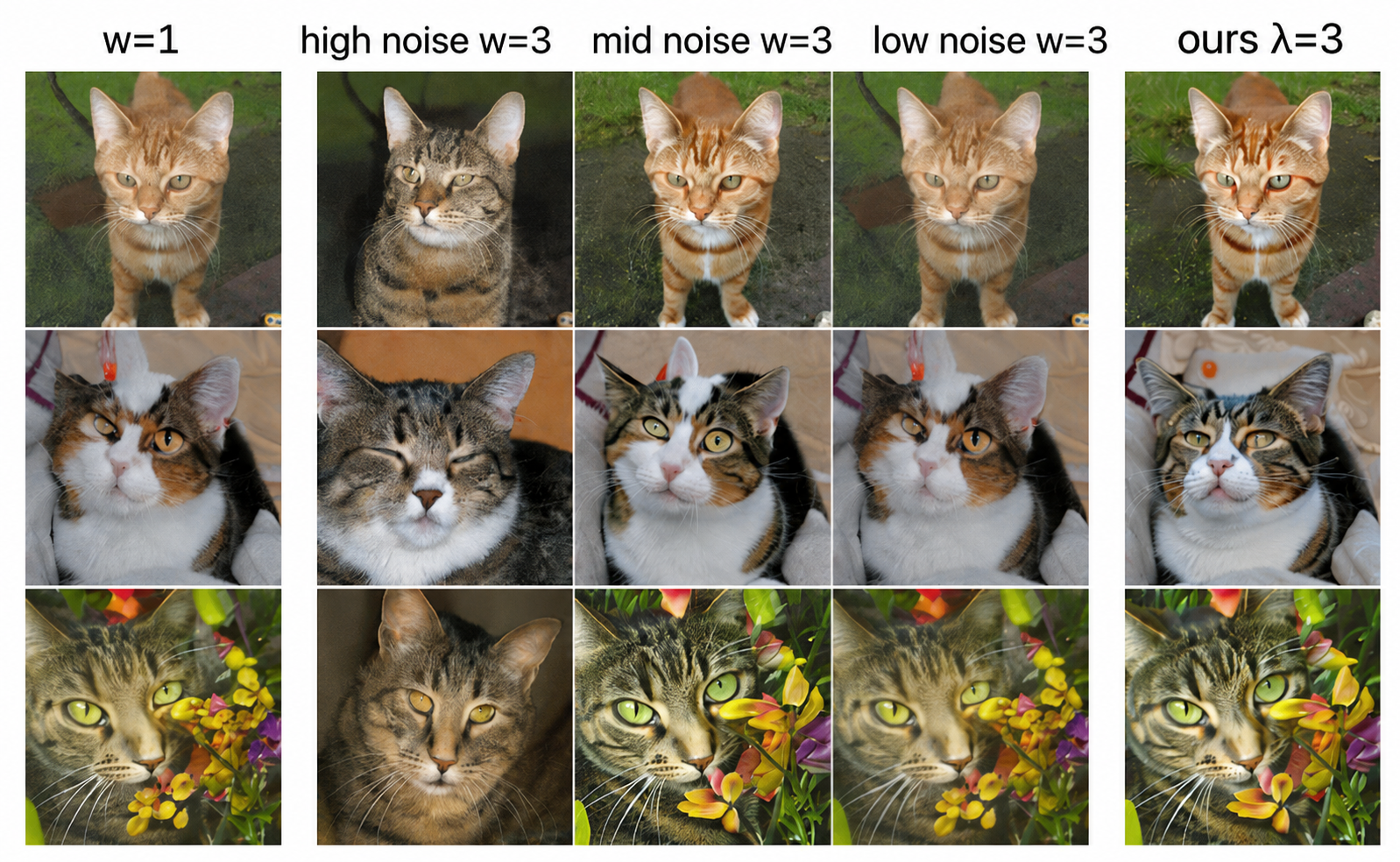}
        \caption{ImageNet class 281, tabby cat.}
        \label{fig:interval_ablation_imagenet}
    \end{subfigure}
    \hfill
    \begin{subfigure}[t]{0.49\textwidth}
        \centering
        \includegraphics[width=\linewidth]{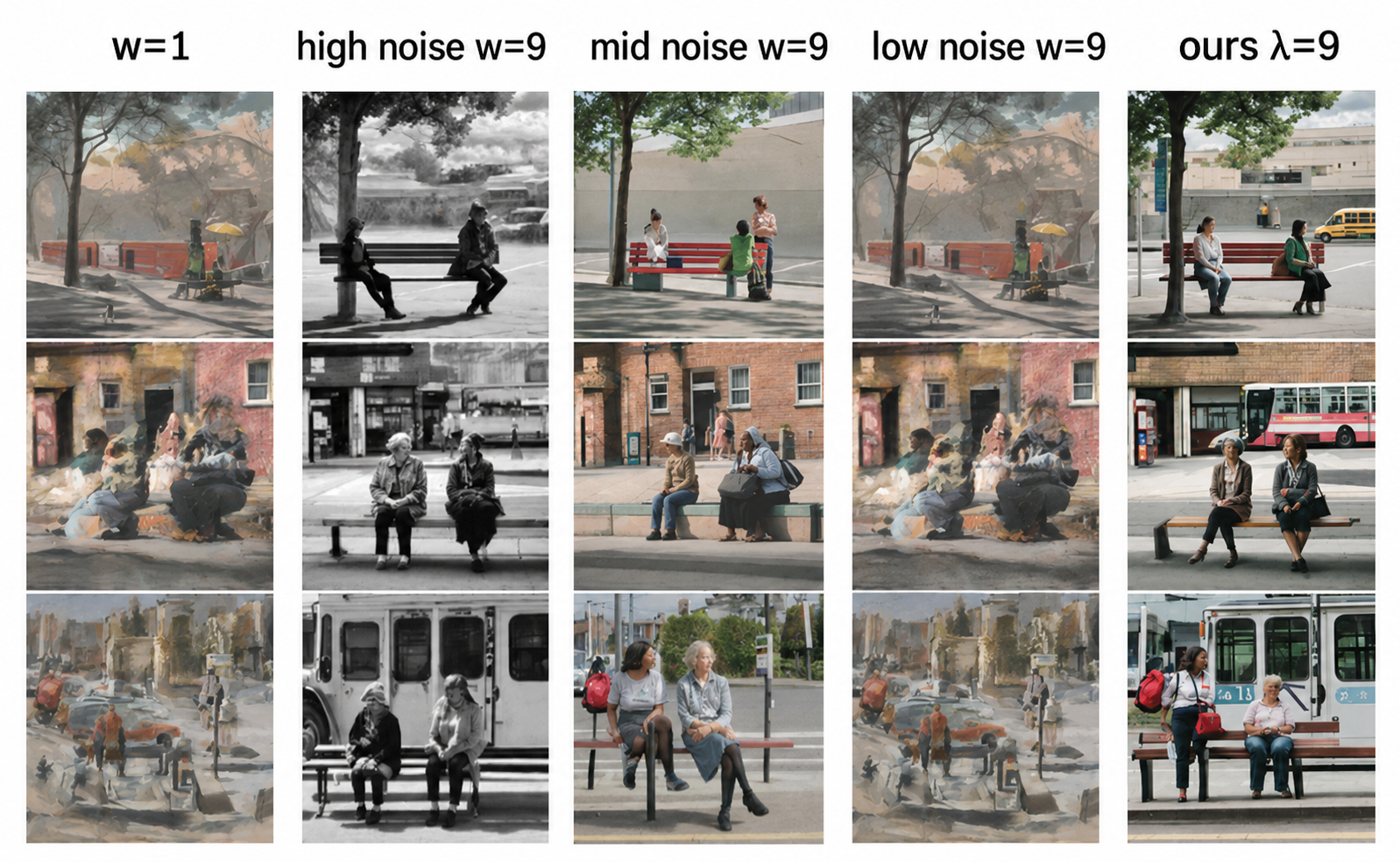}
        \caption{Prompt: ``A woman sitting on a bench and a woman standing waiting for the bus.''}
        \label{fig:interval_ablation_coco}
    \end{subfigure}

\caption{
Stage-wise ablation of guidance location on ImageNet and COCO. In each panel,
guidance is activated only in one noise range and set to \(w_t=1\) elsewhere.
The columns compare \(w=1\), high-noise guidance, middle-noise guidance,
low-noise guidance, and the learned adaptive schedule.
}
    \label{fig:interval_ablation}
\end{figure*}
\subsection{Stage Structure of Learned Guidance}
\label{subsec:learned_schedule_analysis}

We next examine how the learned schedules achieve the trade-offs observed above.
Figure~\ref{fig:learned_schedules} shows the schedules learned for EDM-XXL and
SD-XL under different reference parameters \(\lambda\). The exact shapes depend
on the model and dataset, but the schedules share a consistent qualitative
pattern: guidance is relatively weak in the high-noise regime, becomes stronger
in selected middle-noise ranges, and remains selectively active in the low-noise
regime.

\textbf{Stage-wise ablation.}
To isolate the role of guidance location, we construct stage-wise interval
ablations. In each ablation, guidance is activated only in one noise range
(high, middle, or low noise) and is set to \(w_t=1\) elsewhere. Details of the interval definitions for EDM-XXL and SD-XL are provided in
Appendix~\ref{app:implementation_details}.

Figure~\ref{fig:interval_ablation} shows that the effect of guidance depends strongly on where it is applied. High-noise guidance changes the global object or scene layout most strongly. Middle-noise guidance produces a clearer semantic
shift toward the condition while preserving more of the base sample structure. Low-noise guidance stays closest to the \(w=1\) outputs and mainly changes local
details. Thus, the benefit of the learned schedules is not simply that they are non-uniform; rather, they avoid overly strong high-noise guidance and allocate more guidance to stages where it improves condition consistency with less disruption to global structure.

\subsection{Trajectory Visualization}
\label{subsec:trajectory_visualization}

\begin{figure*}[t]
    \centering
    \includegraphics[width=0.85\textwidth]{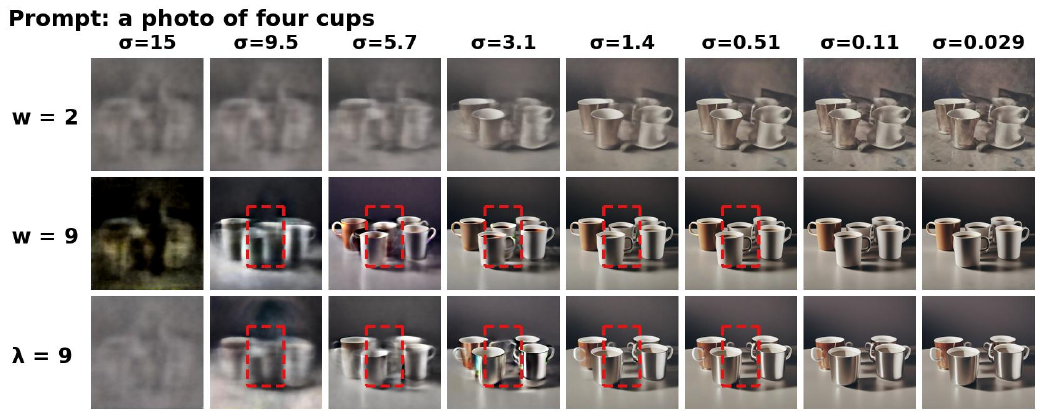}

    \caption{
    Denoising trajectory visualization for the prompt ``a photo of four cups.''
    At each noise level \(\sigma_t\), we visualize the clean prediction
    \(\hat x_0^{\mathbf w}(x_t,y)=x_t+\sigma_t^2s_t^{\mathbf w}(x_t,y)\),
  using the VE score-to-denoiser transformation. From left to
    right, columns show the clean predictions as the noise level decreases.
    }
    \label{fig:four_cups_denoising_trajectory}
    \vspace{-1.6em}
\end{figure*}

To further illustrate how early guidance decisions become locked into the final image,
we visualize the decoded CFG clean prediction at several noise levels. Figure~\ref{fig:four_cups_denoising_trajectory} shows a concrete failure mode
of strong constant guidance. Under constant \(w=9\), the prompt ``a photo of four cups'' is enforced
strongly at early stages, and an ambiguous dark region is pushed toward a fifth
cup. Once this coarse object structure appears, later
denoising mostly refines it rather than removing it, so the final image contains
five cups instead of four. In contrast, our adaptive schedule with
\(\lambda=9\) keeps early guidance weaker. The ambiguous region remains diffuse
at intermediate steps and is later suppressed instead of being sharpened into an
extra cup, producing the correct count of four cups. This example illustrates why timing matters: strong early guidance can
prematurely sharpen ambiguous regions into concrete object structures, which later denoising tends to preserve. A non-uniform schedule avoids this early over-commitment and resolves the structure later.

%% file: 06_conclusion.tex
\section{Conclusion}
\label{sec:conclusion}

We presented an information-theoretic framework for learning classifier-free guidance (CFG) schedules. The key point is that CFG should not be treated only as a fixed-time density tilt or as a single scalar control. Although the guided score field has a noisy-level tilting interpretation, the final samples are generated by the entire reverse trajectory. Therefore, the clean endpoint distribution induced by a guided sampler can differ substantially from both the standard conditional distribution and the clean endpoint tilted reference. This distinction motivates optimizing the actual sampler-induced distribution rather than tuning a constant guidance weight heuristically.

Our formulation uses a clean endpoint reference to specify the desired consistency--coverage trade-off. For a fixed reference parameter $\lambda$, minimizing the forward KL from the actual CFG endpoint distribution to this reference decomposes into two interpretable terms: a consistency term that rewards compatibility with the condition, and a coverage term that penalizes deviation from the original clean data distribution. We derived trajectory-level identities for estimating both terms under the guided probability-flow dynamics using generated samples and score evaluations, avoiding explicit endpoint density estimation. This makes the objective usable with pretrained CFG diffusion models.

Experiments on ImageNet-512 with EDM-XXL and COCO with SD-XL show that the learned schedules improve or match the trade-offs obtained by constant guidance and hand-designed interval guidance. On ImageNet, adaptive schedules improve the precision--recall frontier at comparable precision levels, while on COCO they improve the FID--CLIP trade-off and maintain within-prompt variation. The learned schedules also reveal a consistent stage structure: guidance is weaker at high noise, stronger in selected middle-noise ranges, and remains selectively active at low noise. Stage-wise ablations and denoising trajectory visualizations show why this matters: excessive early guidance can prematurely commit ambiguous regions to incorrect global structure, whereas adaptive schedules can delay and localize the effect of guidance.

%% file: appendix.tex
\section{Limitations and Broader Impacts}
\label{app:limitations_impacts}

\paragraph{Limitations.}
Our method is developed under the VE probability-flow formulation and relies on
score evaluations from pretrained diffusion models. A central limitation is that
the objective is defined over the actual sampler-induced trajectory distribution
\(q_t^{\mathbf w}(x_t\mid y)\), which is only available implicitly through
generated trajectories. Therefore, computing the exact gradient with respect to
the schedule \(\mathbf w\) is difficult: changing one guidance weight changes
not only the explicit loss integrand, but also the trajectory distribution at
later noise levels. Our update rule uses a fixed-trajectory descent direction
together with resampling-based acceptance, which is effective in our experiments
but remains an approximation to full trajectory-level optimization. The
trajectory objective is also estimated with finite samples and Hutchinson trace
estimators, so the learned schedules may depend on estimator variance, sampler
discretization, and the chosen noise grid. Finally, the current
consistency--coverage loss is one possible design rather than the only choice;
future work may improve it with alternative coverage penalties, variance-reduced
estimators, or task-specific consistency measures. Empirically, we evaluate on
EDM-XXL/ImageNet-512 and SD-XL/COCO, and broader evaluation across
architectures, datasets, prompts, and modalities remains future work.

\paragraph{Broader impacts.}
This work aims to improve the controllability of conditional diffusion models by
learning where to apply classifier-free guidance along the reverse trajectory.
Better guidance schedules may improve the balance between condition
consistency, sample quality, and distributional coverage. At the same time,
improvements to image and text-to-image generation can make synthetic content
more realistic or easier to control, which may increase risks such as misleading
generated media or other misuse. We do not release a new pretrained model or
dataset, and our experiments are conducted on standard academic benchmarks using
existing models.

\section{Additional Related Work on Diffusion Time}
\label{app:additional_related_work}

A growing body of work suggests that diffusion time and noise level are
important axes for understanding and controlling generative diffusion models.
Variational diffusion models relate the continuous-time objective to the
signal-to-noise ratio and study noise-schedule optimization
\citep{kingma2021variational}. Prior work on timestep weighting also suggests that diffusion time is not uniform. Changing the weights assigned to different noise levels can substantially affect training and sample quality~\citep{choi2022perception,hang2023efficient}. Related design choices in EDM also emphasize the role of noise parameterization and sampling discretization \citep{karras2022elucidating}. These works focus on
training-time weighting, while our work studies a sampling-time question: how to
allocate classifier-free guidance across the reverse trajectory. 

Several recent works also study how semantic and structural information emerges
along the reverse trajectory. Critical-window analyses show that certain
features can emerge over narrow time intervals of the reverse process
\citep{li2024critical}. Statistical-physics analyses identify distinct
dynamical regimes in diffusion sampling \citep{raya2023spontaneous,
biroli2024dynamical}. Other works measure when semantic information is produced
during generation or analyze how interpretable concepts and semantic features
evolve across denoising timesteps \citep{handke2025measuring,
tinaz2025emergence,kim2025revelio}. These studies motivate treating guidance
location as a meaningful design choice rather than using a single constant
guidance strength at all steps.

Our objective is also related to information-theoretic views of diffusion.
Classical de Bruijn-type identities connect Gaussian smoothing, entropy, and
Fisher information \citep{cover2006elements}, and probability-flow likelihood
formulas use related trajectory-level identities for score-based diffusion
models \citep{song2021scorebased,song2021maximum}. Information-Theoretic
Diffusion further connects diffusion objectives to I-MMSE relations and
information quantities along the noising process \citep{kong2023information}.
In contrast to these works, which primarily analyze diffusion trajectories or
training objectives, we use trajectory-level information quantities to define
and optimize a sampling-time CFG schedule under the actual sampler-induced
distribution.
\section{Additional Discussion of the Gaussian Toy Example}
\label{app:gaussian-toy}

This appendix gives a qualitative discussion of the Gaussian toy example used
in the main text. The purpose is not to derive a closed-form solution for the
CFG-induced endpoint distribution, but to illustrate the terminal mismatch and
the resulting consistency--coverage trade-off.

The Gaussian toy example also illustrates why \(w=1\) need not be the best
empirical choice for the actual sampler. The choice \(w=1\) gives the standard
conditional score field,
\begin{equation}
s_t^w(x_t,y)
=
s_t^{\mathrm{con}}(x_t,y).
\end{equation}
Therefore, if the reverse process were initialized from the exact terminal
conditional distribution \(p_T(x_T\mid y)\), the conditional probability-flow
ODE would transport this distribution back to the clean conditional
distribution \(p_0(x_0\mid y)\).

However, practical sampling initializes the reverse process from the high-noise
Gaussian approximation
\begin{equation}
X_T\sim\mathcal N(0,\sigma_T^2 I),
\label{eq:gaussian_toy_terminal_prior_appendix}
\end{equation}
rather than from the exact class-conditional terminal distribution. In the
Gaussian example, if
\begin{equation}
X_0\mid Y=y \sim \mathcal N(\mu_y,\Sigma),
\end{equation}
then the exact noised conditional distribution is
\begin{equation}
p_T(x_T\mid y)
=
\mathcal N(\mu_y,\Sigma+\sigma_T^2 I).
\label{eq:gaussian_toy_terminal_conditional_appendix}
\end{equation}
When \(\sigma_T\) is large, this distribution is close to the
condition-independent Gaussian initialization in
\eqref{eq:gaussian_toy_terminal_prior_appendix}, but it is not identical to it.
The mismatch includes both a mean mismatch, through \(\mu_y\), and a covariance
mismatch, through \(\Sigma+\sigma_T^2 I\) versus \(\sigma_T^2 I\).

Thus, in this toy example, the reason \(w=1\) may fail to be empirically optimal
is not that the conditional score field is incorrect. Rather, the full sampler
is not exact because the reverse process starts from an approximate,
condition-independent terminal distribution instead of the exact
\(p_T(x_T\mid y)\). Moderate guidance can empirically compensate for this
terminal mismatch by pulling samples toward the corresponding data component.
However, excessively large guidance over-guides the trajectory, increases the
coverage-side KL, and may move samples into low-density regions of the clean
data distribution.

\begin{figure}[t]
    \centering
    \includegraphics[width=0.72\linewidth]{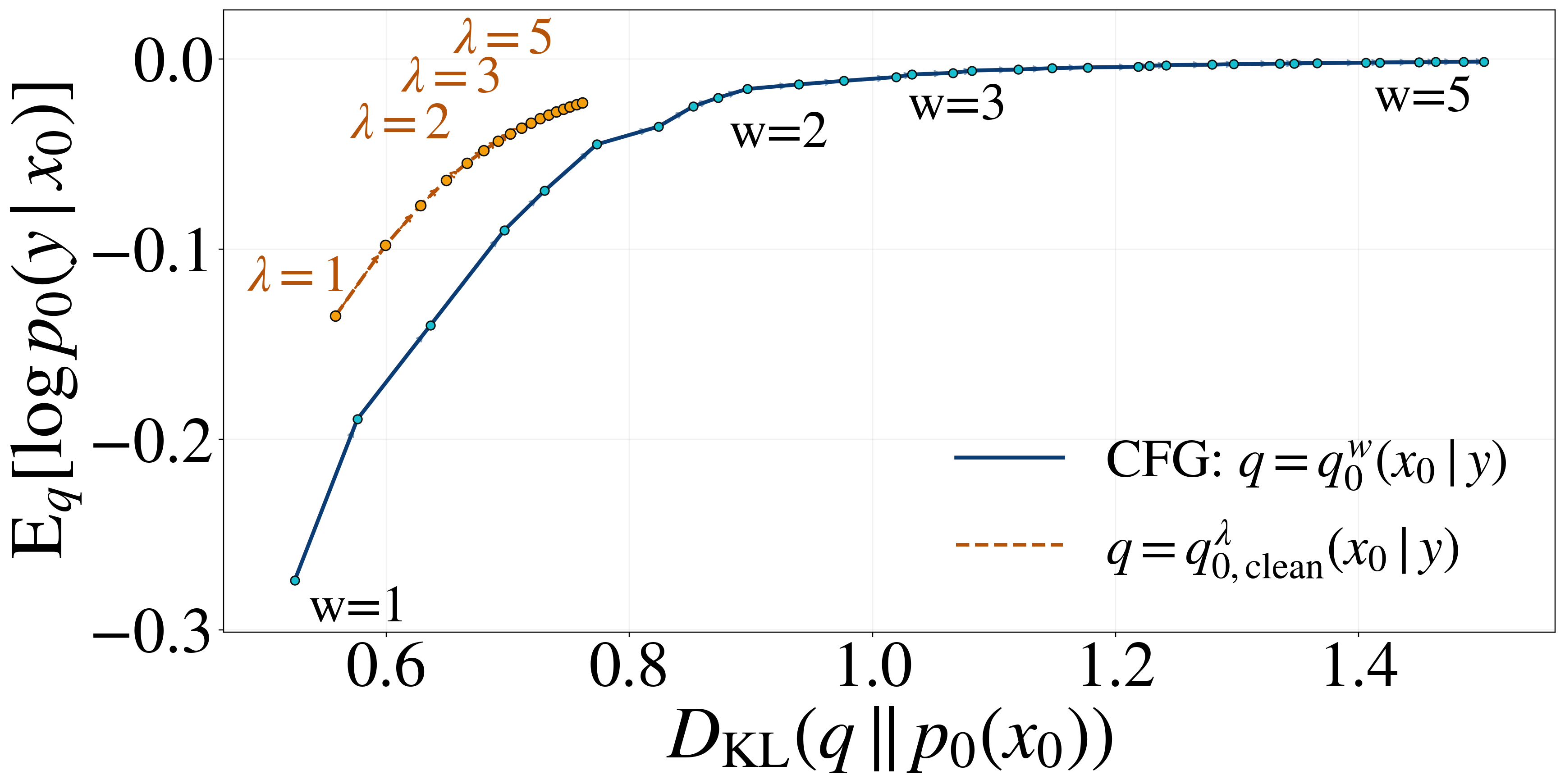}
    \caption{
    Gaussian trade-off between condition consistency and deviation from the
    original clean data distribution. The tilted family provides a natural
    endpoint reference: increasing \(\lambda\) improves condition consistency
    while controlling the deviation from the original clean distribution
    \(p_0(x_0)\).
    }
    \label{fig:gaussian_tradeoff_target}
\end{figure}

Figure~\ref{fig:gaussian_tradeoff_target} connects the Gaussian example to the
information quantities used in our objective. The horizontal axis is the
coverage-side KL
\begin{equation}
\mathrm{KL}\!\left(q\,\|\,p_0(x_0)\right),
\end{equation}
and the vertical axis is the consistency
\begin{equation}
\mathbb E_q[\log p_0(y\mid x_0)].
\end{equation}
The clean endpoint tilted family
\(q_{0,\mathrm{clean}}^\lambda(x_0\mid y)\) traces the desired endpoint
trade-off: increasing \(\lambda\) improves consistency while increasing the
coverage cost in a controlled way.

In contrast, the actual CFG-induced distributions \(q_0^w(x_0\mid y)\) move
along a different curve as \(w\) changes. For the same consistency level, CFG
can incur a larger coverage-side KL, and for large \(w\) it can move much
farther from the original clean data distribution. This gap shows that
increasing the sampler guidance weight \(w\) is not equivalent to increasing
the endpoint trade-off parameter \(\lambda\). Therefore, we use
\(q_{0,\mathrm{clean}}^\lambda(x_0\mid y)\) to define the desired
consistency--coverage trade-off, and then tune the actual CFG schedule
\(\mathbf w\) to approach this target using the sampler-induced distribution.

\section{Details for the Mismatches}
\label{app:mismatch-proof}

\subsection{Clean Endpoint Tilt Versus Noisy Level Tilt}
\label{app:clean_noisy_tilt_mismatch}

We first explain why the clean endpoint tilt and the noisy level tilt are
generally different. The clean endpoint tilt is
\begin{equation}
q_{0,\mathrm{clean}}^w(x_0\mid y)
=
\frac{1}{Z_0(y;w)}
p_0(x_0)p_0(y\mid x_0)^w .
\label{eq:app_clean_tilt}
\end{equation}
After applying the forward noising kernel \(p_{t\mid 0}(x_t\mid x_0)\), its
time \(t\) marginal is
\begin{align}
q_{t,\mathrm{clean}}^w(x_t\mid y)
&=
\int p_{t\mid 0}(x_t\mid x_0)
q_{0,\mathrm{clean}}^w(x_0\mid y)\,dx_0
\nonumber\\
&=
\frac{1}{Z_0(y;w)}
\int p_{t\mid 0}(x_t\mid x_0)
p_0(x_0)p_0(y\mid x_0)^w\,dx_0 .
\label{eq:app_noised_clean_tilt}
\end{align}
Using Bayes' rule under the original forward process,
\begin{equation}
p_0(x_0\mid x_t)
=
\frac{
p_{t\mid 0}(x_t\mid x_0)p_0(x_0)
}{
p_t(x_t)
},
\label{eq:app_posterior_clean_given_noisy}
\end{equation}
we can rewrite \eqref{eq:app_noised_clean_tilt} as
\begin{equation}
q_{t,\mathrm{clean}}^w(x_t\mid y)
=
\frac{p_t(x_t)}{Z_0(y;w)}
\mathbb E_{p_0(x_0\mid x_t)}
\left[
p_0(y\mid x_0)^w
\right].
\label{eq:app_clean_then_noise}
\end{equation}

On the other hand, the noisy level tilt is
\begin{equation}
q_{t,\mathrm{tilt}}^w(x_t\mid y)
=
\frac{1}{Z_t(y;w)}
p_t(x_t)p_t(y\mid x_t)^w .
\label{eq:app_noisy_tilt}
\end{equation}
Under the original joint distribution, \(Y\to X_0\to X_t\) forms a Markov chain, therefore
\begin{equation}
p_t(y\mid x_t)
=
\mathbb E_{p_0(x_0\mid x_t)}
\left[
p_0(y\mid x_0)
\right],
\label{eq:app_noisy_posterior_average}
\end{equation}
we have
\begin{equation}
q_{t,\mathrm{tilt}}^w(x_t\mid y)
=
\frac{p_t(x_t)}{Z_t(y;w)}
\left(
\mathbb E_{p_0(x_0\mid x_t)}
\left[
p_0(y\mid x_0)
\right]
\right)^w .
\label{eq:app_noise_then_tilt}
\end{equation}

Comparing \eqref{eq:app_clean_then_noise} and \eqref{eq:app_noise_then_tilt},
the two constructions differ because
\begin{equation}
\mathbb E_{p_0(x_0\mid x_t)}
\left[
p_0(y\mid x_0)^w
\right]
\neq
\left(
\mathbb E_{p_0(x_0\mid x_t)}
\left[
p_0(y\mid x_0)
\right]
\right)^w
\qquad
\text{in general}.
\label{eq:app_power_expectation_noncommute}
\end{equation}
Thus,
\begin{equation}
q_{t,\mathrm{clean}}^w(x_t\mid y)
\neq
q_{t,\mathrm{tilt}}^w(x_t\mid y)
\qquad
\text{in general}.
\label{eq:app_clean_noisy_mismatch_result}
\end{equation}
Equality can hold in special cases, for example when \(w=0\), when \(w=1\) or when
\(p_0(y\mid X_0)\) is almost surely constant under \(p_0(x_0\mid x_t)\), but it
does not hold in general.

\subsection{Noisy Level Tilt Versus Actual CFG Distribution}
\label{app:noisy_tilt_actual_cfg_law}

The noisy level tilt
\begin{equation}
q_{t,\mathrm{tilt}}^w(x_t\mid y)
\propto
p_t(x_t)p_t(y\mid x_t)^w
\label{eq:app_noisy_tilt_again}
\end{equation}
is useful because its score is exactly the CFG field:
\begin{equation}
\nabla_{x_t}\log q_{t,\mathrm{tilt}}^w(x_t\mid y)
=
s_t^{\mathrm{un}}(x_t)
+
w\left(
s_t^{\mathrm{con}}(x_t,y)-s_t^{\mathrm{un}}(x_t)
\right)
=
s_t^w(x_t,y).
\label{eq:app_tilt_score_cfg_field}
\end{equation}
However, this identity is a statement about the score field at a single noise
level. It does not imply that the family
\(\{q_{t,\mathrm{tilt}}^w(\cdot\mid y)\}_{t\in[0,T]}\) is the distribution path
generated by the CFG probability-flow ODE.

The actual CFG distribution \(q_t^w(x_t\mid y)\) is defined by evolving samples
through the guided ODE, integrated backward from \(t=T\) to \(t=0\):
\begin{equation}
\frac{dX_t}{dt}
=
-\sigma_t\dot{\sigma}_t s_t^w(X_t,y).
\label{eq:app_cfg_ode}
\end{equation}
Therefore \(q_t^w(x_t\mid y)\) is determined by the full transport dynamics and
by the sampling distribution at the \(T\) end. A collection of densities
defined separately at each noise level, such as \(q_{t,\mathrm{tilt}}^w\), need
not satisfy the continuity equation induced by \eqref{eq:app_cfg_ode}.
Consequently,
\begin{equation}
q_t^w(x_t\mid y)
\neq
q_{t,\mathrm{tilt}}^w(x_t\mid y)
\qquad
\text{in general}.
\label{eq:app_path_tilt_mismatch_result}
\end{equation}
Equivalently, even though \(s_t^w\) is the score of the noisy level tilt, it is
not generally the score of the actual CFG law:
\begin{equation}
s_t^w(x_t,y)
\neq
\nabla_{x_t}\log q_t^w(x_t\mid y).
\label{eq:app_field_score_mismatch_result}
\end{equation}
This is why our information-theoretic quantities are evaluated under the
actual CFG law \(q_t^w(x_t\mid y)\), rather than under the noisy level tilt.

\section{Proof of \cref{lem:terminal_clean_objective}}
\label{app:objective-proof}
Assume the relevant densities are positive on the support of
\(q_0^{\mathbf w}(x_0\mid y)\) and the expectations below are finite.
\begin{proof}
By the definition of the clean endpoint reference,
\begin{equation}
q_{0,\mathrm{clean}}^\lambda(x_0\mid y)
=
\frac{1}{Z_\lambda(y)}
p_0(x_0)p_0(y\mid x_0)^\lambda ,
\end{equation}
and therefore
\begin{equation}
\log q_{0,\mathrm{clean}}^\lambda(x_0\mid y)
=
\log p_0(x_0)
+
\lambda \log p_0(y\mid x_0)
-
\log Z_\lambda(y).
\end{equation}
Substituting this identity into the endpoint KL gives
\begin{align}
&\mathbb E_{p(y)}
\mathrm{KL}\!\left(
q_0^{\mathbf w}(x_0\mid y)
\,\|\,
q_{0,\mathrm{clean}}^\lambda(x_0\mid y)
\right)
\nonumber\\
&=
\mathbb E_{p(y)}
\mathbb E_{q_0^{\mathbf w}(x_0\mid y)}
\left[
\log q_0^{\mathbf w}(x_0\mid y)
-
\log q_{0,\mathrm{clean}}^\lambda(x_0\mid y)
\right]
\nonumber\\
&=
\mathbb E_{p(y)}
\mathrm{KL}\!\left(
q_0^{\mathbf w}(x_0\mid y)
\,\|\,
p_0(x_0)
\right)
-
\lambda
\mathbb E_{p(y)q_0^{\mathbf w}(x_0\mid y)}
\left[
\log p_0(y\mid x_0)
\right]
+
\mathbb E_{p(y)}
\left[
\log Z_\lambda(y)
\right].
\end{align}
The last term is a scalar constant for fixed \(\lambda\), because it depends on
\(\lambda\) and the marginal \(p(y)\), but not on the guidance schedule
\(\mathbf w\). Therefore, for fixed \(\lambda\), minimizing the endpoint KL over
\(\mathbf w\) is equivalent to minimizing
\begin{align}
\mathcal L_0(\mathbf w;\lambda)
=
-\lambda
\mathbb E_{p(y)q_0^{\mathbf w}(x_0\mid y)}
\left[
\log p_0(y\mid x_0)
\right]
+
\mathbb E_{p(y)}
\mathrm{KL}\!\left(
q_0^{\mathbf w}(x_0\mid y)
\,\|\,
p_0(x_0)
\right).
\end{align}
This proves the claim.
\end{proof}

\section{Proofs for the Consistency Quantities}
\label{app:consistency-proof}

\paragraph{Standard information-theoretic diffusion along the original VE family.}
Information theory provides a natural lens for analyzing the generation process:
mutual information measures dependence on the condition, while entropy measures the physical spread of the sample distribution.
Under additive Gaussian smoothing, the relevant entropy and mutual-information identities follow from the heat-flow form of de Bruijn's identity and its mutual-information counterpart; see \cite{cover2006elements}.

Let
\begin{equation}
\tau(t):=\sigma_t^2,
\qquad
\dot\tau(t):=\frac{d}{dt}\sigma_t^2.
\end{equation}
Then the conditional entropy $H(X_t\mid Y)$ and unconditional entropy $H(X_t)$ evolve according to
\begin{align}
H(X_t\mid Y)
&=
H(X_T\mid Y)
+\frac12\int_t^T \dot\tau(s)\,
\mathbb E_{p_s(x_s,y)}
\!\left[
\nabla_x\!\cdot s_s^{\mathrm{con}}(x_s,y)
\right]ds,
\label{eq:Ht-cond}
\\
H(X_t)
&=
H(X_T)
+\frac12\int_t^T \dot\tau(s)\,
\mathbb E_{p_s(x_s)}
\!\left[
\nabla_x\!\cdot s_s^{\mathrm{un}}(x_s)
\right]ds.
\label{eq:Ht-uncond}
\end{align}
Consequently, the mutual information $I(X_t;Y)=H(X_t)-H(X_t\mid Y)$ admits the exact representation
\begin{align}
I(X_t;Y)
&=
I(X_T;Y)
-\frac12\int_t^T \dot\tau(s)\,
\mathbb E_{p_s(x_s,y)}
\!\left[
\nabla_x\!\cdot s_s^{\mathrm{con}}(x_s,y)
\right]ds
\nonumber\\
&\quad
+\frac12\int_t^T \dot\tau(s)\,
\mathbb E_{p_s(x_s,y)}
\!\left[
\nabla_x\!\cdot s_s^{\mathrm{un}}(x_s)
\right]ds.
\label{eq:standard-mi}
\end{align}
The identities above are obtained by applying the heat-flow entropy and mutual-information derivatives from \cite{wibisono2016variational} and then reparameterizing by $\tau(t)=\sigma_t^2$.

\textit{Remark.}
For the original additive Gaussian smoothing family induced by $p_0$, integration by parts yields
\begin{equation}
\mathbb E_{p_t}
\!\left[
\nabla_x\!\cdot s_t^{\mathrm{un}}(X_t)
\right]
=
-
\mathbb E_{p_t}
\!\left[
\|s_t^{\mathrm{un}}(X_t)\|^2
\right],
\end{equation}
and analogously, for each fixed $y$,
\begin{equation}
\mathbb E_{p_t(\cdot\mid y)}
\!\left[
\nabla_x\!\cdot s_t^{\mathrm{con}}(X_t,y)
\right]
=
-
\mathbb E_{p_t(\cdot\mid y)}
\!\left[
\|s_t^{\mathrm{con}}(X_t,y)\|^2
\right].
\end{equation}

In this appendix, we derive the consistency identity used in
\cref{lem:consistency_path_identity}. Throughout, we use the VE/additive
Gaussian smoothing parameterization
\begin{equation}
X_t=X_0+\sigma_t Z,
\qquad
Z\sim\mathcal N(0,I_D),
\qquad
Z\perp (X_0,Y),
\end{equation}
and write
\begin{equation}
\tau(t):=\sigma_t^2,
\qquad
\dot\tau(t)=\frac{d}{dt}\sigma_t^2=2\sigma_t\dot\sigma_t .
\end{equation}
The original unconditional and conditional scores are
\begin{equation}
s_t^{\mathrm{un}}(x_t)
:=
\nabla_{x_t}\log p_t(x_t),
\qquad
s_t^{\mathrm{con}}(x_t,y)
:=
\nabla_{x_t}\log p_t(x_t\mid y).
\end{equation}
We also define
\begin{equation}
s_t^{\mathrm{diff}}(x_t,y)
:=
s_t^{\mathrm{con}}(x_t,y)-s_t^{\mathrm{un}}(x_t).
\label{eq:app-score-diff}
\end{equation}
By Bayes' rule,
\begin{equation}
s_t^{\mathrm{diff}}(x_t,y)
=
\nabla_{x_t}\log p_t(y\mid x_t).
\label{eq:app-score-diff-bayes}
\end{equation}

The CFG score field is
\begin{equation}
s_t^{\mathbf w}(x_t,y)
=
(1-w_t)s_t^{\mathrm{un}}(x_t)
+
w_t s_t^{\mathrm{con}}(x_t,y)
=
s_t^{\mathrm{un}}(x_t)
+
w_t s_t^{\mathrm{diff}}(x_t,y),
\label{eq:app-guided-score}
\end{equation}
and the induced probability-flow dynamics are
\begin{equation}
\frac{d}{dt}x_t
=
-\frac12\dot\tau(t)s_t^{\mathbf w}(x_t,y)
=
-\sigma_t\dot\sigma_t s_t^{\mathbf w}(x_t,y).
\label{eq:app-guided-ve-ode}
\end{equation}

\subsection{VE heat-flow identities}

For each fixed \(y\), the conditional density is obtained by Gaussian
smoothing:
\begin{equation}
p_t(x_t\mid y)
=
\int p_0(x_0\mid y)\,
\mathcal N(x_t;x_0,\sigma_t^2 I_D)\,dx_0.
\end{equation}
Similarly,
\begin{equation}
p_t(x_t)
=
\int p_0(x_0)\,
\mathcal N(x_t;x_0,\sigma_t^2 I_D)\,dx_0.
\end{equation}
Therefore both densities satisfy the heat equation in the variance parameter
\(\tau=\sigma_t^2\):
\begin{align}
\partial_t p_t(x_t\mid y)
&=
\frac12\dot\tau(t)\Delta_x p_t(x_t\mid y),
\\
\partial_t p_t(x_t)
&=
\frac12\dot\tau(t)\Delta_x p_t(x_t).
\end{align}
Using
\[
\Delta p = \nabla_x\!\cdot(p\nabla_x\log p)
=
p\left(\nabla_x\!\cdot \nabla_x\log p+\|\nabla_x\log p\|^2\right),
\]
we obtain
\begin{align}
\partial_t \log p_t(x_t\mid y)
&=
\frac12\dot\tau(t)
\left[
\nabla_x\!\cdot s_t^{\mathrm{con}}(x_t,y)
+
\|s_t^{\mathrm{con}}(x_t,y)\|^2
\right],
\label{eq:app-logpt-con-score-form}
\\
\partial_t \log p_t(x_t)
&=
\frac12\dot\tau(t)
\left[
\nabla_x\!\cdot s_t^{\mathrm{un}}(x_t)
+
\|s_t^{\mathrm{un}}(x_t)\|^2
\right].
\label{eq:app-logpt-un-score-form}
\end{align}

\subsection{Consistency identity}

Define
\begin{equation}
C(t)
:=
\mathbb E_{p(y)q_t^{\mathbf w}(x_t\mid y)}
\left[
\log p_t(y\mid x_t)
\right].
\end{equation}
Let
\[
F_t(x_t,y):=\log p_t(y\mid x_t).
\]
By Bayes' rule,
\begin{equation}
F_t(x_t,y)
=
\log p_t(x_t\mid y)-\log p_t(x_t)+\log p(y),
\end{equation}
and therefore
\begin{equation}
\nabla_x F_t(x_t,y)
=
s_t^{\mathrm{con}}(x_t,y)-s_t^{\mathrm{un}}(x_t)
=
s_t^{\mathrm{diff}}(x_t,y).
\label{eq:app-grad-F}
\end{equation}
Using \eqref{eq:app-logpt-con-score-form} and
\eqref{eq:app-logpt-un-score-form}, we also have
\begin{align}
\partial_t F_t(x_t,y)
&=
\frac12\dot\tau(t)
\Big[
\nabla_x\!\cdot s_t^{\mathrm{diff}}(x_t,y)
+
\|s_t^{\mathrm{con}}(x_t,y)\|^2
-
\|s_t^{\mathrm{un}}(x_t)\|^2
\Big]
\nonumber\\
&=
\frac12\dot\tau(t)
\Big[
\nabla_x\!\cdot s_t^{\mathrm{diff}}(x_t,y)
+
\left\langle
s_t^{\mathrm{diff}}(x_t,y),
s_t^{\mathrm{con}}(x_t,y)+s_t^{\mathrm{un}}(x_t)
\right\rangle
\Big].
\label{eq:app-partial-F}
\end{align}

The distribution \(q_t^{\mathbf w}(x_t\mid y)\) is transported by the velocity
\[
b_t^{\mathbf w}(x_t,y)
=
-\frac12\dot\tau(t)s_t^{\mathbf w}(x_t,y).
\]
Equivalently, \(q_t^{\mathbf w}\) satisfies the continuity equation
\[
\partial_t q_t^{\mathbf w}(x_t\mid y)
+
\nabla_x\cdot
\left(
q_t^{\mathbf w}(x_t\mid y)b_t^{\mathbf w}(x_t,y)
\right)
=0.
\]
Therefore, applying the standard transport identity to
\(F_t(x_t,y)=\log p_t(y\mid x_t)\), we have
\[
\frac{d}{dt}
\mathbb E_{q_t^{\mathbf w}(x_t\mid y)}
\left[
F_t(x_t,y)
\right]
=
\mathbb E_{q_t^{\mathbf w}(x_t\mid y)}
\left[
\partial_t F_t(x_t,y)
+
\left\langle
\nabla_x F_t(x_t,y),
b_t^{\mathbf w}(x_t,y)
\right\rangle
\right].
\]
Hence the derivative of \(C(t)\) along the transported distribution is
\begin{align}
\frac{d}{dt}C(t)
&=
\mathbb E_{p(y)q_t^{\mathbf w}(x_t\mid y)}
\left[
\partial_t F_t(x_t,y)
+
\left\langle
\nabla_x F_t(x_t,y),
b_t^{\mathbf w}(x_t,y)
\right\rangle
\right]
\nonumber\\
&=
\frac12\dot\tau(t)
\mathbb E_{p(y)q_t^{\mathbf w}(x_t\mid y)}
\Big[
\nabla_x\!\cdot s_t^{\mathrm{diff}}(x_t,y)
+
\left\langle
s_t^{\mathrm{diff}}(x_t,y),
s_t^{\mathrm{con}}(x_t,y)+s_t^{\mathrm{un}}(x_t)-s_t^{\mathbf w}(x_t,y)
\right\rangle
\Big].
\end{align}
Using
\[
s_t^{\mathbf w}(x_t,y)
=
s_t^{\mathrm{un}}(x_t)+w_t s_t^{\mathrm{diff}}(x_t,y),
\]
this becomes
\begin{align}
\frac{d}{dt}C(t)
&=
\frac12\dot\tau(t)
\mathbb E_{p(y)q_t^{\mathbf w}(x_t\mid y)}
\Big[
\nabla_x\!\cdot s_t^{\mathrm{diff}}(x_t,y)
+
\left\langle
s_t^{\mathrm{diff}}(x_t,y),
s_t^{\mathrm{con}}(x_t,y)
\right\rangle
-
w_t\|s_t^{\mathrm{diff}}(x_t,y)\|^2
\Big].
\label{eq:app-consistency-derivative}
\end{align}
Since \(\frac12\dot\tau(t)=\sigma_t\dot\sigma_t\), integrating
\eqref{eq:app-consistency-derivative} from \(t\) to \(T\) gives
\begin{align}
C(t)
&=
C(T)
-
\int_t^T
\sigma_s\dot\sigma_s
\mathbb E_{p(y)q_s^{\mathbf w}(x_s\mid y)}
\Big[
\nabla_x\!\cdot s_s^{\mathrm{diff}}(x_s,y)
+
\left\langle
s_s^{\mathrm{diff}}(x_s,y),
s_s^{\mathrm{con}}(x_s,y)
\right\rangle
\nonumber\\
&\hspace{6.6cm}
-
w_s\|s_s^{\mathrm{diff}}(x_s,y)\|^2
\Big]ds,
\end{align}
which proves \cref{lem:consistency_path_identity}.

\section{Proofs for the Coverage Quantity}
\label{app:coverage-proof}

We now prove \cref{lem:coverage_kl_identity}. Let
\begin{equation}
D(t)
:=
\mathbb E_{p(y)}
\mathrm{KL}\!\left(
q_t^{\mathbf w}(x_t\mid y)
\,\|\,
p_t(x_t)
\right).
\end{equation}
The guided distribution \(q_t^{\mathbf w}(x_t\mid y)\) is transported by
\begin{equation}
b_t^{\mathbf w}(x_t,y)
=
-\frac12\dot\tau(t)s_t^{\mathbf w}(x_t,y),
\end{equation}
while the original unconditional marginal \(p_t(x_t)\) is transported by
\begin{equation}
b_t^{\mathrm{un}}(x_t)
=
-\frac12\dot\tau(t)s_t^{\mathrm{un}}(x_t).
\end{equation}
The corresponding continuity equations are
\begin{align}
\partial_t q_t^{\mathbf w}
+
\nabla_x\!\cdot(q_t^{\mathbf w}b_t^{\mathbf w})
&=0,
\\
\partial_t p_t
+
\nabla_x\!\cdot(p_t b_t^{\mathrm{un}})
&=0.
\end{align}

Using the transport identity for
\[
D(t)=
\mathbb E_{p(y)q_t^{\mathbf w}}
\left[
\log q_t^{\mathbf w}(x_t\mid y)-\log p_t(x_t)
\right],
\]
we get
\begin{align}
\frac{d}{dt}D(t)
&=
\mathbb E_{p(y)q_t^{\mathbf w}}
\left[
-\nabla_x\!\cdot b_t^{\mathbf w}
-
\partial_t\log p_t
-
\left\langle
b_t^{\mathbf w},
\nabla_x\log p_t
\right\rangle
\right].
\end{align}
From the continuity equation for \(p_t\),
\begin{equation}
\partial_t\log p_t(x_t)
=
-\nabla_x\!\cdot b_t^{\mathrm{un}}(x_t)
-
\left\langle
b_t^{\mathrm{un}}(x_t),
s_t^{\mathrm{un}}(x_t)
\right\rangle .
\end{equation}
Substituting this identity gives
\begin{align}
\frac{d}{dt}D(t)
&=
\mathbb E_{p(y)q_t^{\mathbf w}}
\left[
-\nabla_x\!\cdot b_t^{\mathbf w}
+
\nabla_x\!\cdot b_t^{\mathrm{un}}
+
\left\langle
b_t^{\mathrm{un}}-b_t^{\mathbf w},
s_t^{\mathrm{un}}
\right\rangle
\right]
\nonumber\\
&=
\mathbb E_{p(y)q_t^{\mathbf w}}
\left[
-\nabla_x\!\cdot
\left(
b_t^{\mathbf w}-b_t^{\mathrm{un}}
\right)
-
\left\langle
b_t^{\mathbf w}-b_t^{\mathrm{un}},
s_t^{\mathrm{un}}
\right\rangle
\right].
\end{align}
Here \(w_t\) is a time-dependent scalar schedule, so it is independent of \(x_t\).
Since
\begin{equation}
b_t^{\mathbf w}(x_t,y)-b_t^{\mathrm{un}}(x_t)
=
-\frac12\dot\tau(t)w_t s_t^{\mathrm{diff}}(x_t,y),
\end{equation}
we obtain
\begin{align}
\frac{d}{dt}D(t)
&=
\frac12\dot\tau(t)
\mathbb E_{p(y)q_t^{\mathbf w}(x_t\mid y)}
w_t
\left[
\nabla_x\!\cdot s_t^{\mathrm{diff}}(x_t,y)
+
\left\langle
s_t^{\mathrm{un}}(x_t),
s_t^{\mathrm{diff}}(x_t,y)
\right\rangle
\right].
\label{eq:app-coverage-derivative}
\end{align}
Integrating from \(0\) to \(T\) and using
\(\frac12\dot\tau(t)=\sigma_t\dot\sigma_t\) yields
\begin{align}
D(0)
&=
D(T)
-
\int_0^T
\sigma_t\dot\sigma_t\,
\mathbb E_{p(y)q_t^{\mathbf w}(x_t\mid y)}
w_t
\left[
\nabla_x\!\cdot s_t^{\mathrm{diff}}(x_t,y)
+
\left\langle
s_t^{\mathrm{un}}(x_t),
s_t^{\mathrm{diff}}(x_t,y)
\right\rangle
\right]dt.
\end{align}
This is exactly \cref{lem:coverage_kl_identity}.

\section{Loss Implementation}
\label{app:loss-implementation}

\subsection{Combining the consistency and coverage identities}

Recall that the endpoint objective is
\begin{equation}
\mathcal L_0(\mathbf w;\lambda)
=
-\lambda C(0)+D(0),
\end{equation}
where
\begin{equation}
C(0)
:=
\mathbb E_{p(y)q_0^{\mathbf w}(x_0\mid y)}
\left[
\log p_0(y\mid x_0)
\right],
\end{equation}
and
\begin{equation}
D(0)
:=
\mathbb E_{p(y)}
\mathrm{KL}\!\left(
q_0^{\mathbf w}(x_0\mid y)
\,\|\,
p_0(x_0)
\right).
\end{equation}
From \cref{lem:consistency_path_identity}, we have
\begin{align}
C(0)
&=
C(T)
-
\int_0^T
\sigma_t\dot\sigma_t\,
\mathbb E_{p(y)q_t^{\mathbf w}(x_t\mid y)}
\left[
A_t(x_t,y)
-
w_t R_t(x_t,y)
\right]dt,
\label{eq:app-C-path-short}
\end{align}
where
\begin{align}
A_t(x_t,y)
&:=
\nabla_x\!\cdot s_t^{\mathrm{diff}}(x_t,y)
+
\left\langle
s_t^{\mathrm{diff}}(x_t,y),
s_t^{\mathrm{con}}(x_t,y)
\right\rangle,
\\
R_t(x_t,y)
&:=
\left\|
s_t^{\mathrm{diff}}(x_t,y)
\right\|^2.
\end{align}
From \cref{lem:coverage_kl_identity}, we also have
\begin{align}
D(0)
&=
D(T)
-
\int_0^T
\sigma_t\dot\sigma_t\,
\mathbb E_{p(y)q_t^{\mathbf w}(x_t\mid y)}
\left[
w_t B_t(x_t,y)
\right]dt,
\label{eq:app-D-path-short}
\end{align}
where
\begin{equation}
B_t(x_t,y)
:=
\nabla_x\!\cdot s_t^{\mathrm{diff}}(x_t,y)
+
\left\langle
s_t^{\mathrm{un}}(x_t),
s_t^{\mathrm{diff}}(x_t,y)
\right\rangle .
\end{equation}
Since
\begin{equation}
s_t^{\mathrm{con}}(x_t,y)
=
s_t^{\mathrm{un}}(x_t)
+
s_t^{\mathrm{diff}}(x_t,y),
\end{equation}
we have
\begin{equation}
A_t(x_t,y)
=
B_t(x_t,y)
+
R_t(x_t,y),
\qquad
\text{or equivalently}
\qquad
B_t(x_t,y)
=
A_t(x_t,y)-R_t(x_t,y).
\label{eq:app-B-A-R-relation}
\end{equation}

Substituting \eqref{eq:app-C-path-short} and
\eqref{eq:app-D-path-short} into
\(\mathcal L_0(\mathbf w;\lambda)=-\lambda C(0)+D(0)\), and dropping the
terminal terms \(-\lambda C(T)+D(T)\) that are shared by schedules using the
same terminal sampling prior, gives the schedule-dependent trajectory objective
\begin{align}
\mathcal L_{0,\mathrm{traj}}(\mathbf w;\lambda)
&=
\int_0^T
\sigma_t\dot\sigma_t\,
\mathbb E_{p(y)q_t^{\mathbf w}(x_t\mid y)}
\left[
\lambda
\left(
A_t(x_t,y)-w_tR_t(x_t,y)
\right)
-
w_tB_t(x_t,y)
\right]dt
\nonumber\\
&=
\int_0^T
\sigma_t\dot\sigma_t\,
\mathbb E_{p(y)q_t^{\mathbf w}(x_t\mid y)}
\left[
\lambda A_t
-
\lambda w_t R_t
-
w_t(A_t-R_t)
\right]dt
\nonumber\\
&=
\int_0^T
\sigma_t\dot\sigma_t\,
\mathbb E_{p(y)q_t^{\mathbf w}(x_t\mid y)}
\left[
(\lambda-w_t)A_t
+
w_t(1-\lambda)R_t
\right]dt .
\label{eq:app-continuous-combined-loss}
\end{align}
This is the continuous trajectory loss that is discretized in the implementation.

\subsection{Fixed-trajectory proposal}

This appendix describes the fixed-trajectory update direction used to form
guidance schedule proposals. The update differentiates only the explicit
dependence on the schedule variables, while the final accept/reject decision is
made after resampling trajectories under the proposed schedule.

For a sampler with reverse-time grid
\[
T=t_K>\cdots>t_1\approx 0,
\]
we write the stepwise schedule as
\[
\mathbf w=(w_1,\ldots,w_K),
\qquad
w_k=w(t_k).
\]
At optimization iteration \(m\), we generate trajectories
\[
\{x_{t_k}^{(b,m)}\}_{k=1,b=1}^{K,B}
\]
under the current schedule \(\mathbf w^{(m)}\). Holding these trajectories
fixed, define the empirical local quantities
\begin{align}
A_k^{(m)}
&:=
\frac{1}{B}
\sum_{b=1}^B
\Bigg[
\nabla_x\!\cdot
s_{t_k}^{\mathrm{diff}}(x_{t_k}^{(b,m)},y^{(b)})
+
\left\langle
s_{t_k}^{\mathrm{diff}}(x_{t_k}^{(b,m)},y^{(b)}),
s_{t_k}^{\mathrm{con}}(x_{t_k}^{(b,m)},y^{(b)})
\right\rangle
\Bigg],
\label{eq:app-local-A}
\\
R_k^{(m)}
&:=
\frac{1}{B}
\sum_{b=1}^B
\left\|
s_{t_k}^{\mathrm{diff}}(x_{t_k}^{(b,m)},y^{(b)})
\right\|^2 .
\label{eq:app-local-R}
\end{align}
Here
\[
s_{t_k}^{\mathrm{diff}}(x,y)
=
s_{t_k}^{\mathrm{con}}(x,y)-s_{t_k}^{\mathrm{un}}(x).
\]

The Riemann approximation of \eqref{eq:app-continuous-combined-loss} uses
quadrature weights \(a_k\ge 0\) associated with the continuous-time factor
\(\sigma_t\dot\sigma_t\,dt\):
\begin{align}
\widehat{\mathcal L}_{0,\mathrm{traj}}
(\mathbf w;\lambda)
&=
\sum_{k=1}^K
a_k
\left[
(\lambda-w_k)A_k
+
w_k(1-\lambda)R_k
\right].
\label{eq:app-riemann-loss}
\end{align}
Thus, \(a_k\) controls the relative contribution of the \(k\)-th noise interval
in the discretized trajectory objective.

For the schedule update, we use the fixed-trajectory direction
\begin{equation}
G_k^{(m)}
:=
-
A_k^{(m)}
+
(1-\lambda)R_k^{(m)}.
\label{eq:step_update_direction}
\end{equation}
This direction is the explicit derivative of the local integrand with respect
to \(w_k\), holding the sampled trajectories fixed. It does not include the
implicit dependence of the sampler-induced distribution
\(q_t^{\mathbf w}(x_t\mid y)\) on the schedule.

\paragraph{Quadrature weights and update scales.}
The continuous trajectory objective contains the time-weighting factor
\(\sigma_t\dot\sigma_t\,dt\). After discretization, this factor gives the
quadrature weight \(a_k\) for each sampler step. In the actual proposal update,
we use a step-dependent update scale \(\alpha_k\). This scale is chosen
according to the quadrature weight \(a_k\), but need not be exactly equal to it.
This gives flexibility to preserve the relative importance of different noise
intervals while also controlling the numerical scale of the update.

In our implementation, we set
\begin{equation}
\alpha_k
=
\eta\,\bar a_k,
\label{eq:app-alpha-scale}
\end{equation}
where \(\bar a_k\) is a normalized version of the quadrature weight \(a_k\), and
\(\eta>0\) is a tunable global scaling factor. Thus, \(a_k\) comes from
discretizing the continuous-time objective, \(\alpha_k\) is the actual update
scale used in the schedule proposal, and \(\eta\) controls the overall update
magnitude.

Using this scale, the projected proposal is
\begin{equation}
\widetilde w_k
=
\operatorname{clip}_{[w_{\min},w_{\max}]}
\left(
w_k^{(m)}
-
\alpha_k G_k^{(m)}
\right),
\qquad
k=1,\ldots,K .
\label{eq:stepwise_proposal}
\end{equation}
We then generate fresh trajectories under \(\widetilde{\mathbf w}\) and accept
the proposal only if the resampled estimate of the objective decreases.

\begin{algorithm}[H]
\caption{Stepwise Guidance Schedule Update}
\label{alg:stepwise_guidance_update}
\begin{algorithmic}[1]
\REQUIRE \(\lambda\), initial schedule \(\mathbf w^{(0)}=\{w_k^{(0)}\}_{k=1}^K\), conditions \(\{y^{(b)}\}_{b=1}^B\), normalized quadrature weights \(\{\bar a_k\}_{k=1}^K\), update scale \(\eta\), bounds \([w_{\min},w_{\max}]\)
\FOR{\(m=0,\ldots,M-1\)}
    \STATE Generate trajectories \(\{x_{t_k}^{(b,m)}\}_{k,b}\) with \(\mathbf w^{(m)}\)
    \STATE Estimate \(\widehat{\mathcal L}_{0}(\mathbf w^{(m)};\lambda)\)
    \STATE Compute \(\{G_k^{(m)}\}_{k=1}^K\) using \eqref{eq:step_update_direction}
    \STATE Set \(\alpha_k=\eta \bar a_k\) for \(k=1,\ldots,K\)
    \FOR{\(k=1,\ldots,K\)}
        \STATE \(\widetilde w_k
        \gets
        \operatorname{clip}_{[w_{\min},w_{\max}]}
        \left(
        w_k^{(m)}-\alpha_k G_k^{(m)}
        \right)\)
    \ENDFOR
    \STATE Generate trajectories \(\{\widetilde x_{t_k}^{(b)}\}_{k,b}\) with \(\widetilde{\mathbf w}\)
    \STATE Estimate \(\widehat{\mathcal L}_{0}(\widetilde{\mathbf w};\lambda)\)
    \IF{\(\widehat{\mathcal L}_{0}(\widetilde{\mathbf w};\lambda)
    <
    \widehat{\mathcal L}_{0}(\mathbf w^{(m)};\lambda)\)}
        \STATE \(\mathbf w^{(m+1)}\gets \widetilde{\mathbf w}\)
    \ELSE
        \STATE \(\mathbf w^{(m+1)}\gets \mathbf w^{(m)}\)
    \ENDIF
\ENDFOR
\STATE \textbf{return} \(\mathbf w^*=\mathbf w^{(M)}\)
\end{algorithmic}
\end{algorithm}

Since changing the schedule also changes the induced distribution
\(q_t^{\mathbf w}(x_t\mid y)\), the fixed-trajectory direction is only a
proposal direction. In the next iteration, trajectories are generated again
under the accepted schedule \(\mathbf w^{(m+1)}\), and the update directions are
recomputed from these new samples.

\begin{figure}[t]
    \centering
    \includegraphics[width=0.78\linewidth]{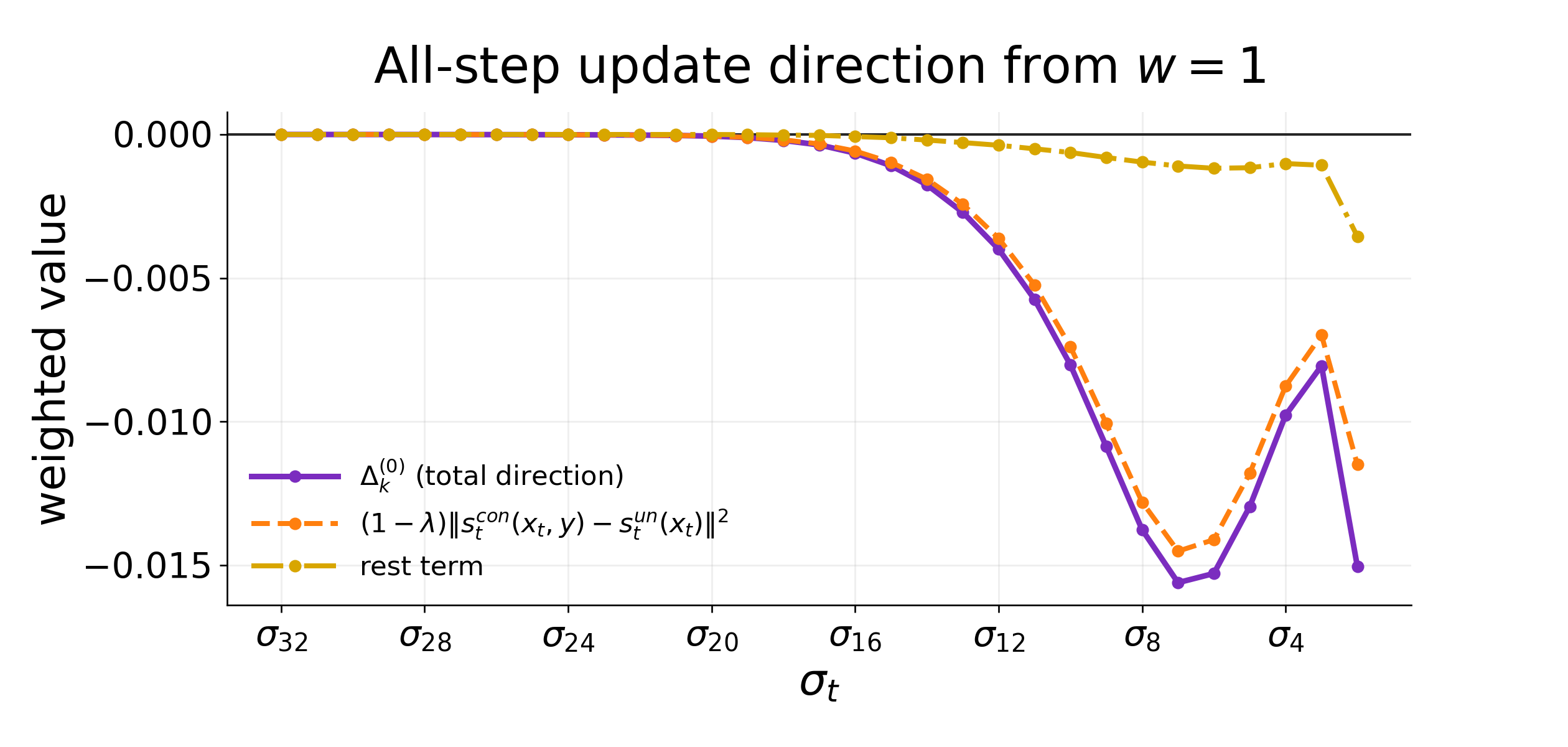}
    \caption{
    Decomposition of the fixed-trajectory update direction from \(w=1\).
    Negative values indicate steps where the proposal increases \(w_t\), since
    the update takes the form
    \(w_t \leftarrow w_t-\alpha_t G_t\).
    The squared score-difference term
    \(\|s_t^{\mathrm{con}}-s_t^{\mathrm{un}}\|^2\) dominates the update trend,
    showing that the learned update mainly amplifies the condition-dependent
    guidance direction already present in CFG.
    }
    \label{fig:loss_decomposition_appendix}
\end{figure}

Figure~\ref{fig:loss_decomposition_appendix} shows the update direction from
\(w=1\). Negative values indicate steps where the projected proposal increases
\(w_t\), since the proposal takes the form
\(w_t \leftarrow w_t-\alpha_t G_t\). The dominant contribution comes from the
squared score-difference term
\(\|s_t^{\mathrm{con}}-s_t^{\mathrm{un}}\|^2\), which is the energy of the same
condition-dependent direction that CFG adds on top of the unconditional
reference score \(s_t^{\mathrm{un}}\). Therefore, the learned update can be
interpreted as amplifying the guidance direction already present in CFG, rather
than moving the sampler toward an unrelated direction. The dominant negative
values appear in the lower-noise region, suggesting that the objective prefers
increasing guidance after the sample has moved away from the highest noise
levels. The bottom panels show the schedules obtained by this update process.
For both EDM-XXL and SD-XL, larger \(\lambda\) leads to stronger guidance, but
the increase is not uniform across all steps. Instead, the learned schedules
keep \(w_t\) close to \(1\) at high noise, increase guidance in middle-noise
steps, and apply stronger guidance in selected lower-noise steps. This supports
the view that the desired consistency and coverage trade-off should be
controlled by where guidance is applied, not only by its overall magnitude.

\section{Experiment Details}
\label{app:implementation_details}

This appendix provides additional details for the experiments in
Section~\ref{sec:experiments}. We first describe the schedule settings used for
the quantitative comparisons, and then provide the optimization and evaluation
details.

\textbf{Matched schedule settings.}
In the main text, we compare constant guidance, interval guidance,
\(\beta\)-CFG, and our adaptive schedules under matched average guidance
strengths. The mean guidance is defined as
\[
\bar w:=\frac{1}{K}\sum_{k=1}^K w_k .
\]
Figure~\ref{fig:schedule_comparison_appendix} shows representative schedules
used in these matched comparisons. The SD-XL example corresponds to the COCO
setting with mean guidance \(\bar w=7\), and the EDM-XXL example corresponds to
the ImageNet-512 setting with mean guidance \(\bar w=2\). For our method, the
adaptive schedules are learned from the trajectory objective with the
corresponding reference parameter \(\lambda\).

\begin{figure*}[t]
    \centering

    \begin{subfigure}[t]{0.82\textwidth}
        \centering
        \includegraphics[width=\linewidth]{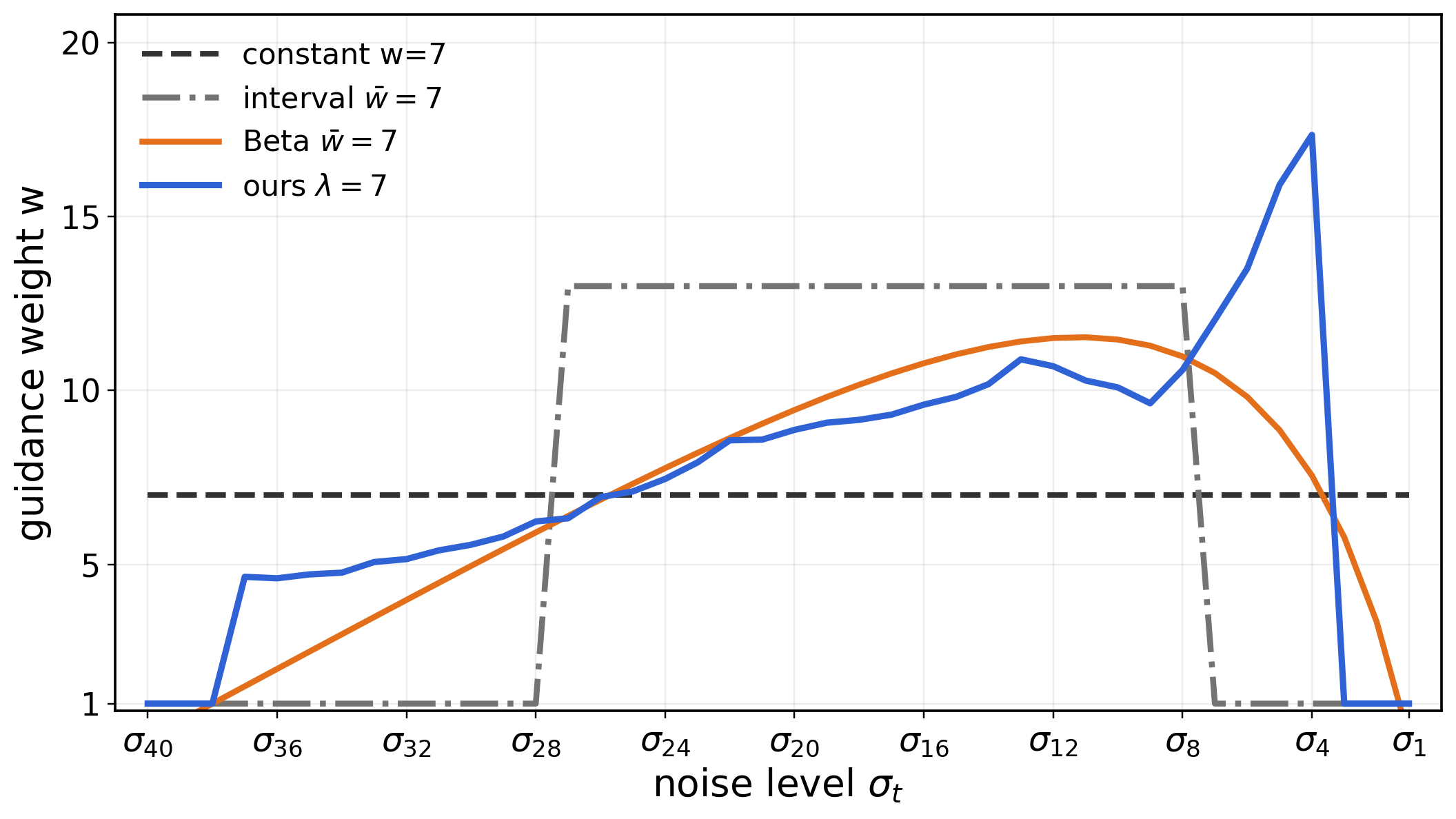}
        \caption{SD-XL, mean guidance \(\bar w=7\).}
        \label{fig:sdxl_schedule_mean7}
    \end{subfigure}

    \vspace{0.8em}

    \begin{subfigure}[t]{0.82\textwidth}
        \centering
        \includegraphics[width=\linewidth]{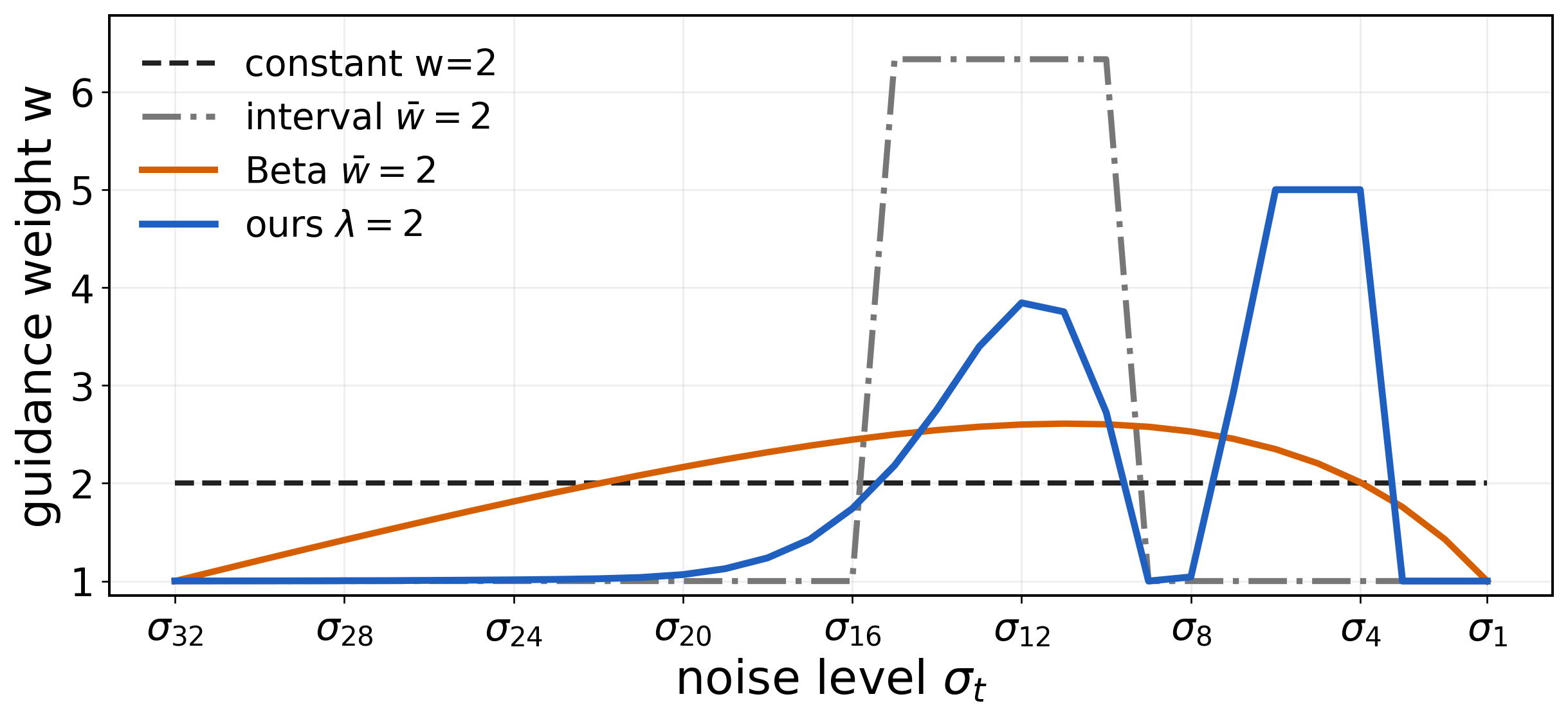}
        \caption{EDM-XXL, mean guidance \(\bar w=2\).}
        \label{fig:edm_schedule_mean2}
    \end{subfigure}

    \caption{
    Schedule comparisons used for the quantitative experiments. Top:
    SD-XL schedule comparison at mean guidance \(\bar w=7\), including constant
    guidance, interval guidance, \(\beta\)-CFG, and our adaptive schedule with
    \(\lambda=7\). Bottom: EDM-XXL schedule comparison at mean guidance
    \(\bar w=2\), with the same set of schedule types. Our adaptive schedules
    are non-uniform and allocate guidance selectively across noise levels rather
    than applying a constant guidance weight at all steps.
    }
    \label{fig:schedule_comparison_appendix}
\end{figure*}

\textbf{Optimization details.}
For each reference parameter \(\lambda\), we initialize the adaptive schedule
from \(w_t=1\) and optimize one guidance value per sampler step. Each schedule
optimization uses 128 generated samples and two Hutchinson noise vectors for the
divergence estimates, which we found sufficient for stable schedule updates in
practice. We run 15 optimization iterations for each schedule. On a single B200
GPU, optimizing one schedule takes about 15 minutes. After optimization, the
learned schedule is fixed and used for generation and evaluation.

We use the same sampler, number of sampling steps, conditioning inputs, and
initial noise seeds when comparing different guidance methods within each
evaluation setting. For ImageNet-512, precision and recall are computed using
DINOv2 feature embeddings. FID, CLIP, and LPIPS are computed with their standard
pretrained evaluation networks.

\textbf{Stage-wise ablation intervals.}
For the stage-wise ablations in Figure~\ref{fig:interval_ablation}, we activate
guidance only in one noise range and set \(w_t=1\) elsewhere. Larger
\(\sigma\)-indices correspond to higher noise levels. For EDM-XXL with 32
sampling steps, we use interval guidance \(w=3\) on the high-noise interval
\(\sigma_{32}\text{--}\sigma_{20}\), the middle-noise interval
\(\sigma_{16}\text{--}\sigma_{10}\), or the low-noise interval
\(\sigma_{9}\text{--}\sigma_{4}\). For SD-XL with 40 sampling steps, we use
interval guidance \(w=9\) on the high-noise interval
\(\sigma_{40}\text{--}\sigma_{28}\), the middle-noise interval
\(\sigma_{27}\text{--}\sigma_{8}\), or the low-noise interval
\(\sigma_{7}\text{--}\sigma_{1}\).

\paragraph{Existing assets and licenses.}
We use ImageNet-512 and COCO for academic evaluation, and credit the original
datasets and benchmark protocols in the main text. ImageNet is used under its
research and educational access terms, and COCO annotations are distributed
under the Creative Commons Attribution 4.0 license. We use the public EDM-XXL
and SD-XL pretrained models for research evaluation, following their released
model terms; SD-XL is released under the CreativeML Open RAIL++-M license. We
also use standard pretrained evaluation networks for FID, CLIP, LPIPS, and
precision/recall metrics, and cite the corresponding original works.
\section{Evaluation Metrics}

\begin{table*}[t]
\caption{
ImageNet-512 results with EDM-XXL. Results are grouped by matched mean
guidance strength \(\bar w\in\{1.2,2\}\). Within each group, we compare constant
guidance, interval guidance, \(\beta\)-CFG, and our adaptive schedule. FID
measures image quality, precision measures fidelity, recall measures
distributional coverage, and F-score summarizes the precision--recall balance.
}
\label{tab:imagenet_results}
\centering
\setlength{\tabcolsep}{6.0pt}
\renewcommand{\arraystretch}{1.15}
\begin{tabular}{l|c|c|c|c}
\hline
\multicolumn{5}{c}{\textbf{ImageNet-512}} \\
\hline
\textbf{Method}
& \textbf{FID $\downarrow$}
& \textbf{Precision $\uparrow$}
& \textbf{Recall $\uparrow$}
& \textbf{F-score $\uparrow$} \\
\hline

\multicolumn{5}{l}{\textit{\(\bar w=1.2\)}} \\
constant
& 1.83 & 0.8033 & 0.7831 & 0.7931 \\
interval
& \textbf{1.41} & \textbf{0.8223} & \textbf{0.7991} & \textbf{0.8105} \\
\(\beta\)-CFG
& 1.74 & 0.8158 & 0.7797 & 0.7973 \\
adaptive \((\lambda=1.3)\)
& 1.45 & 0.8196 & 0.7971 & 0.8082 \\
\hline

\multicolumn{5}{l}{\textit{\(\bar w =2\)}} \\
constant
& 5.54 & 0.8966 & 0.7040 & 0.7887 \\
interval
& 3.53 & 0.8514 & 0.7359 & 0.7894 \\
\(\beta\)-CFG
& 5.55 & \textbf{0.8990} & 0.7061 & 0.7910 \\
adaptive \((\lambda=2)\)
& \textbf{2.38} & 0.8593 & \textbf{0.7687} & \textbf{0.8115} \\
\hline
\end{tabular}
\end{table*}

\subsection{Evaluation}
\label{sec:evaluation}

We evaluate guided generation from three complementary perspectives: \emph{consistency}, \emph{diversity}, and \emph{image quality}. Since no single metric fully captures all three aspects, we report a collection of standard metrics and interpret each according to its primary role.

\paragraph{Classifier Accuracy (Consistency).}
For class-conditional generation, we measure semantic consistency using a pretrained classifier $C(\cdot)$. Given generated samples $\{x_i\}_{i=1}^N$ with target labels $\{y_i\}_{i=1}^N$, the top-1 classifier accuracy is
\begin{equation}
\mathrm{Acc}
=
\frac{1}{N}\sum_{i=1}^N \mathbf{1}\!\left[\arg\max C(x_i)=y_i\right].
\end{equation}
A higher classifier accuracy indicates that the generated image more faithfully matches the intended class condition. We therefore treat classifier accuracy as a \emph{consistency} metric.

\paragraph{CLIP Score (Consistency).}
For text-to-image generation, we evaluate prompt consistency using CLIP similarity between the generated image and the conditioning text. Let $f_{\mathrm{img}}(\cdot)$ and $f_{\mathrm{text}}(\cdot)$ denote normalized CLIP image and text embeddings. For generated image-text pairs $(x_i, y_i)$, we compute
\begin{equation}
\mathrm{CLIPScore}
=
\frac{1}{N}\sum_{i=1}^N
\cos\!\bigl(f_{\mathrm{img}}(x_i),\, f_{\mathrm{text}}(y_i)\bigr).
\end{equation}
A higher CLIP score indicates stronger text-image alignment, so we use it as a \emph{consistency} metric.

\paragraph{Precision and Recall (Quality and Diversity).}
We compute the improved precision and recall metric in a pretrained feature space following \cite{kynkaanniemi2019improved}. Let $\phi(x)$ be the feature embedding of an image. Precision measures the fraction of generated samples that lie inside the manifold of real samples, while recall measures the fraction of real samples covered by the generated manifold. Formally, if $\mathcal{M}_{\mathrm{real}}$ and $\mathcal{M}_{\mathrm{gen}}$ denote the nonparametric manifolds constructed from real and generated features, then
\begin{equation}
\mathrm{Precision}
=
\frac{1}{|\mathcal{G}|}
\sum_{g\in \mathcal{G}}
\mathbf{1}\!\left[\phi(g)\in \mathcal{M}_{\mathrm{real}}\right],
\qquad
\mathrm{Recall}
=
\frac{1}{|\mathcal{R}|}
\sum_{r\in \mathcal{R}}
\mathbf{1}\!\left[\phi(r)\in \mathcal{M}_{\mathrm{gen}}\right].
\end{equation}
Here, \emph{precision} primarily reflects sample fidelity / realism, while \emph{recall} primarily reflects distributional coverage and is thus more closely related to \emph{diversity}. In class-conditional experiments, recall is especially useful as an external proxy for within-class mode coverage.

\paragraph{Fr\'echet Inception Distance (Image Quality / Overall Fidelity).}
We compute FID between real and generated samples in Inception feature space. If $(\mu_r,\Sigma_r)$ and $(\mu_g,\Sigma_g)$ are the empirical mean and covariance of the real and generated features, then
\begin{equation}
\mathrm{FID}
=
\|\mu_r-\mu_g\|_2^2
+
\mathrm{Tr}\!\Bigl(
\Sigma_r+\Sigma_g-2(\Sigma_r\Sigma_g)^{1/2}
\Bigr).
\end{equation}
Lower FID indicates that the generated distribution is closer to the real data distribution. Since FID mixes fidelity and coverage into a single distributional distance, we mainly interpret it as an \emph{overall image quality / fidelity} metric rather than a pure consistency or pure diversity metric.

\paragraph{LPIPS (Perceptual Diversity Proxy).}
To quantify sample diversity, we compute the average pairwise LPIPS distance among multiple samples generated under the same condition. For $K$ samples $\{x^{(1)},\dots,x^{(K)}\}$ generated from the same class or the same text prompt, we define
\begin{equation}
\mathrm{LPIPS}_{\mathrm{avg}}
=
\frac{2}{K(K-1)}
\sum_{1\le a < b \le K}
\mathrm{LPIPS}\!\left(x^{(a)},x^{(b)}\right).
\end{equation}
A larger LPIPS indicates that samples are more perceptually spread out, so we use it as an external \emph{diversity} proxy. We emphasize, however, that LPIPS measures perceptual distance rather than entropy itself, and therefore should be interpreted as a diversity proxy rather than a direct estimator of $H(X\mid Y)$.

\paragraph{Metric Roles in Our Analysis.}
In summary, we group the metrics as follows:
\begin{itemize}
    \item \textbf{Consistency:} classifier accuracy (class-conditional), CLIP score (text-to-image).
    \item \textbf{Diversity:} recall and average pairwise LPIPS.
    \item \textbf{Image quality / fidelity:} precision and FID.
\end{itemize}
Among these metrics, classifier accuracy and CLIP score are the most direct external indicators of condition alignment; recall and LPIPS are our main external indicators of diversity; and FID and precision measure how realistic and high-quality the generated images remain under different guidance schedules.

\section{Additional Generation}

\begin{figure*}[t]
    \centering
    \includegraphics[width=\textwidth]{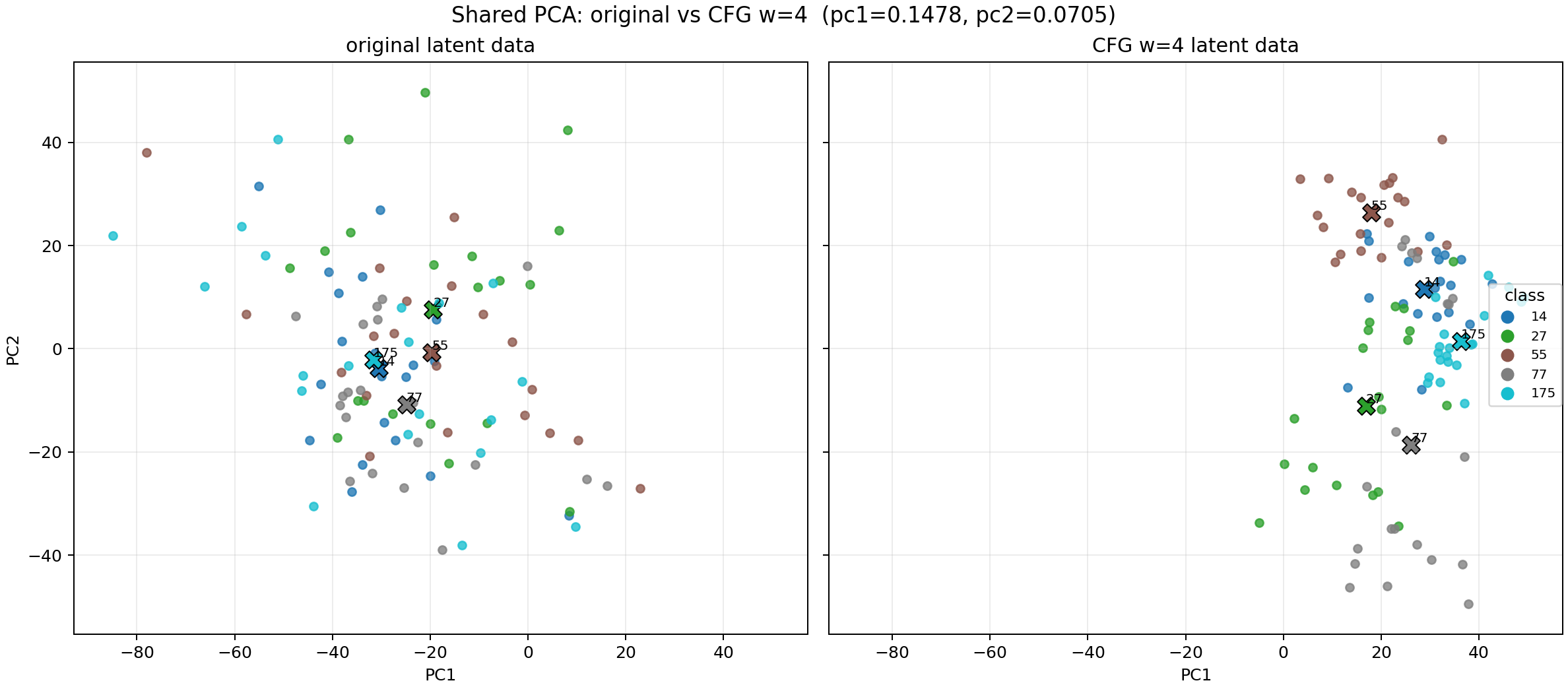}
    \caption{
    Shared PCA visualization of the original latent data and the latent data generated with CFG weight $w=4$. 
    Both panels are projected onto the same PCA basis. 
    The left panel shows the original latent data, while the right panel shows the CFG-generated latent data. 
    Colored markers indicate class membership, and large crossed markers indicate class means. 
    The CFG distribution exhibits stronger class separation and concentration in the shared PCA space.
    }
    \label{fig:shared_pca_cfgw4}
\end{figure*}

\begin{figure}[t]
    \centering
    \includegraphics[width=\textwidth]{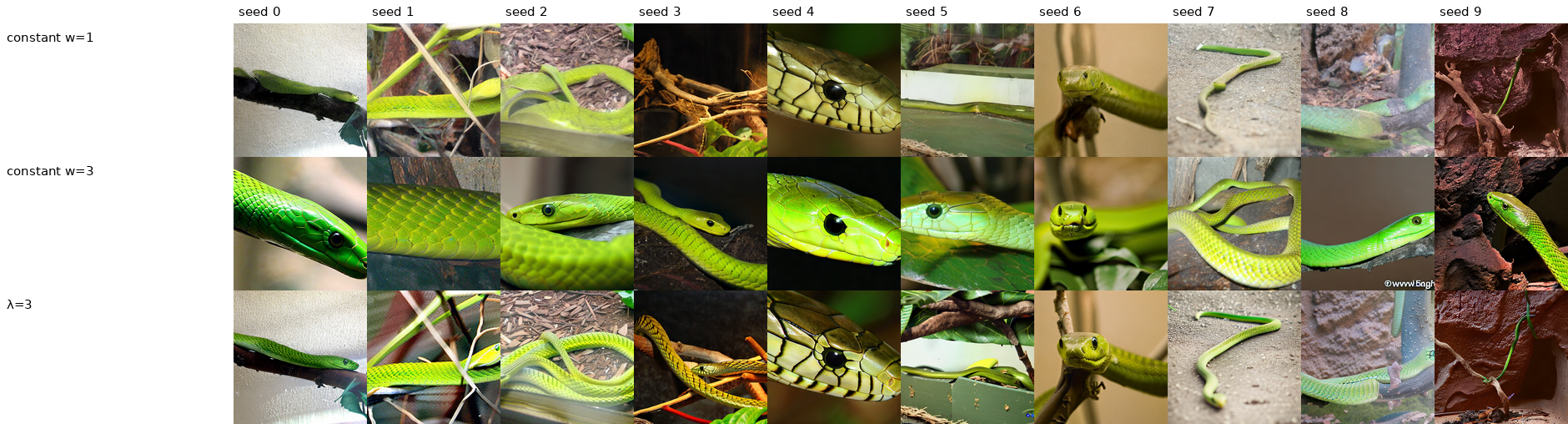}
    \caption{
    Qualitative comparison on ImageNet class 64, green mamba, using the same ten random seeds.
    The first row shows samples generated with constant guidance weight $w=1$.
    The second row shows samples generated with constant guidance weight $w=3$.
    The third row shows samples generated with the proposed step-wise optimized guidance schedule with tradeoff parameter $\lambda=3$.
    Compared with $w=1$, stronger guidance improves class consistency and visual sharpness.
    Compared with constant $w=3$, the optimized schedule preserves similar semantic consistency while maintaining more seed-level variation in pose, background, and composition.
    }
    \label{fig:green_mamba_qualitative}
\end{figure}

\begin{figure}[t]
    \centering
    \includegraphics[width=\textwidth]{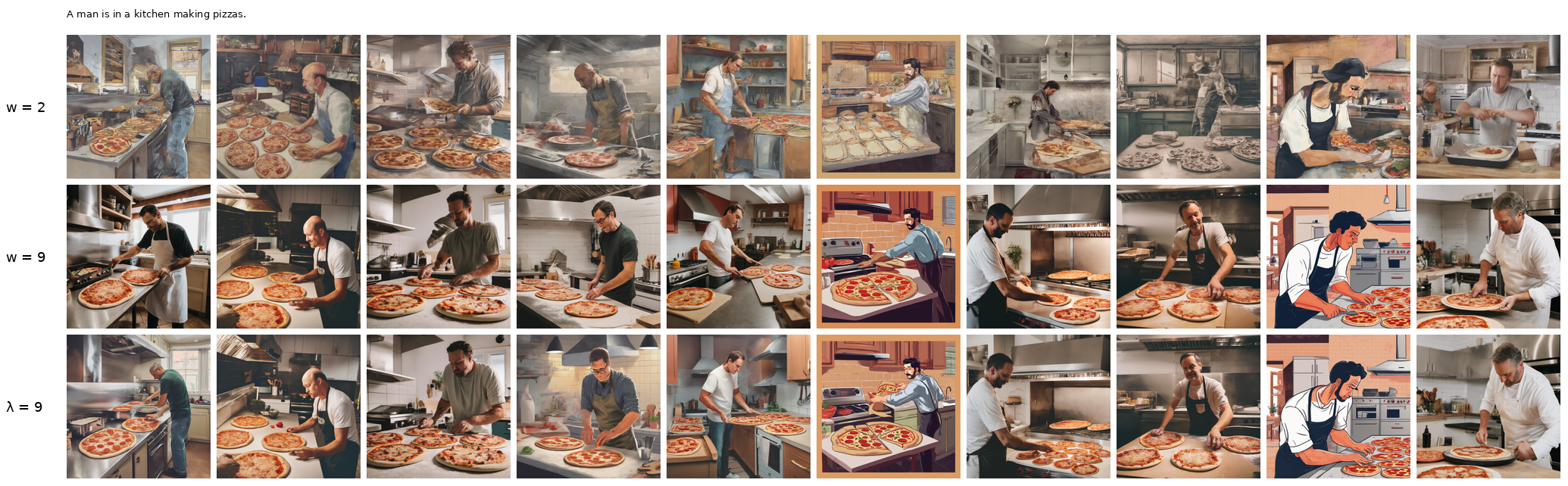}
    \caption{
    Seed-matched qualitative comparison on the COCO prompt ``A man is in a kitchen making pizzas.''
    Rows correspond to constant guidance $w=2$, constant guidance $w=9$, and our optimized schedule with $\lambda=9$.
    Increasing the constant guidance weight improves prompt consistency and visual sharpness, but can reduce sample-level variation.
    The optimized schedule preserves strong prompt alignment while maintaining more variation across seeds in pose, composition, and kitchen layout.
    }
    \label{fig:kitchen_pizza_qualitative}
\end{figure}

\clearpage
\newpage
\begin{ack}
Use unnumbered first level headings for the acknowledgments. All acknowledgments
go at the end of the paper before the list of references. Moreover, you are required to declare
funding (financial activities supporting the submitted work) and competing interests (related financial activities outside the submitted work).
More information about this disclosure can be found at: \url{https://neurips.cc/Conferences/2025/PaperInformation/FundingDisclosure}.

Do {\bf not} include this section in the anonymized submission, only in the final paper. You can use the \texttt{ack} environment provided in the style file to automatically hide this section in the anonymized submission.
\end{ack}
\newpage
\section*{NeurIPS Paper Checklist}

\begin{enumerate}

\item {\bf Claims}
    \item[] Question: Do the main claims made in the abstract and introduction accurately reflect the paper's contributions and scope?
    \item[] Answer: \answerYes{}.
    \item[] Justification: The abstract and introduction accurately describe the paper's main contributions: an information-theoretic objective for CFG schedule optimization, trajectory-level estimators for the consistency and coverage terms, and experiments on ImageNet-512 and COCO. The claims are supported by the theoretical development in Sections~\ref{sec:it_cfg}--\ref{sec:guidance_schedule_algorithm} and the empirical results in Section~\ref{sec:experiments}.

\item {\bf Limitations}
    \item[] Question: Does the paper discuss the limitations of the work performed by the authors?
    \item[] Answer: \answerYes{}.
    \item[] Justification: Limitations are discussed in Appendix~\ref{app:limitations_impacts}. The discussion covers the VE probability-flow formulation, reliance on score evaluations, finite-sample and Hutchinson estimator effects, the current loss design, and the limited empirical scope across models and datasets.

\item {\bf Theory assumptions and proofs}
    \item[] Question: For each theoretical result, does the paper provide the full set of assumptions and a complete (and correct) proof?
    \item[] Answer: \answerYes{}.
    \item[] Justification: The main assumptions are stated in the VE/additive Gaussian setup and in the relevant lemmas. Full derivations for the objective decomposition, consistency identity, coverage identity, and implementation loss are provided in the appendix.

\item {\bf Experimental result reproducibility}
    \item[] Question: Does the paper fully disclose all the information needed to reproduce the main experimental results of the paper to the extent that it affects the main claims and/or conclusions of the paper (regardless of whether the code and data are provided or not)?
    \item[] Answer: \answerYes{}.
    \item[] Justification: Section~\ref{sec:experiments} and Appendix~\ref{app:implementation_details} provide the models, datasets, samplers, number of sampling steps, matched guidance settings, schedule optimization procedure, number of generated samples, Hutchinson estimators, and evaluation protocols needed to reproduce the main experimental results.

\item {\bf Open access to data and code}
    \item[] Question: Does the paper provide open access to the data and code, with sufficient instructions to faithfully reproduce the main experimental results, as described in supplemental material?
    \item[] Answer: \answerNo{}.
    \item[] Justification: The experiments use publicly available datasets and pretrained models, and the paper provides the experimental and optimization details needed to reproduce the results. Code is not included with the initial anonymous submission, but will be publicly open-sourced upon acceptance.

\item {\bf Experimental setting/details}
    \item[] Question: Does the paper specify all the training and test details (e.g., data splits, hyperparameters, how they were chosen, type of optimizer, etc.) necessary to understand the results?
    \item[] Answer: \answerYes{}.
    \item[] Justification: The experimental setup, datasets, pretrained models, sampling steps, matched guidance settings, optimization iterations, sample counts, and evaluation metrics are described in Section~\ref{sec:experiments} and Appendix~\ref{app:implementation_details}.

\item {\bf Experiment statistical significance}
    \item[] Question: Does the paper report error bars suitably and correctly defined or other appropriate information about the statistical significance of the experiments?
    \item[] Answer: \answerNo{}.
    \item[] Justification: The paper does not report error bars or confidence intervals because the main evaluations require large-scale generation with pretrained diffusion models and are computationally expensive. To reduce variability, methods are compared using the same sampler, number of steps, conditioning inputs, and initial noise seeds within each evaluation setting.

\item {\bf Experiments compute resources}
    \item[] Question: For each experiment, does the paper provide sufficient information on the computer resources (type of compute workers, memory, time of execution) needed to reproduce the experiments?
    \item[] Answer: \answerYes{}.
    \item[] Justification: Appendix~\ref{app:implementation_details} reports the compute used for schedule optimization, including B200 GPU usage, 128 generated samples, two Hutchinson noise vectors, 15 optimization iterations, and an approximate runtime of 15 minutes per schedule. The appendix also specifies the number of generated samples used for ImageNet-512 and COCO evaluation.

\item {\bf Code of ethics}
    \item[] Question: Does the research conducted in the paper conform, in every respect, with the NeurIPS Code of Ethics \url{https://neurips.cc/public/EthicsGuidelines}?
    \item[] Answer: \answerYes{}.
    \item[] Justification: The work uses existing public benchmark datasets and pretrained diffusion models for methodological research. It does not involve human subjects, private data, user data collection, or deployment of a new system.

\item {\bf Broader impacts}
    \item[] Question: Does the paper discuss both potential positive societal impacts and negative societal impacts of the work performed?
    \item[] Answer: \answerYes{}.
    \item[] Justification: Broader impacts are discussed in Appendix~\ref{app:limitations_impacts}. The paper notes that improved guidance schedules may improve controllability and consistency--coverage trade-offs, while also acknowledging that better image and text-to-image generation can increase risks associated with misleading or harmful synthetic content.

\item {\bf Safeguards}
    \item[] Question: Does the paper describe safeguards that have been put in place for responsible release of data or models that have a high risk for misuse (e.g., pretrained language models, image generators, or scraped datasets)?
    \item[] Answer: \answerNA{}.
    \item[] Justification: The paper does not release a new pretrained generative model, dataset, or deployed system. The experiments use existing pretrained models and standard academic benchmarks.

\item {\bf Licenses for existing assets}
    \item[] Question: Are the creators or original owners of assets (e.g., code, data, models), used in the paper, properly credited and are the license and terms of use explicitly mentioned and properly respected?
    \item[] Answer: \answerYes{}{}.
    \item[] Justification: The paper credits the existing datasets, pretrained models, and evaluation methods used in the experiments, including ImageNet-512, COCO, EDM-XXL, SD-XL, FID, CLIP, LPIPS, and precision/recall metrics. Appendix~\ref{app:implementation_details} summarizes the relevant dataset and model terms, including ImageNet research access terms, COCO Creative Commons Attribution 4.0 annotations, and the SD-XL CreativeML Open RAIL++-M license.

\item {\bf New assets}
    \item[] Question: Are new assets introduced in the paper well documented and is the documentation provided alongside the assets?
    \item[] Answer: \answerNA{}.
    \item[] Justification: The paper does not introduce a new dataset, benchmark, pretrained model, or other standalone asset. The contribution is an objective and optimization method for guidance schedules.

\item {\bf Crowdsourcing and research with human subjects}
    \item[] Question: For crowdsourcing experiments and research with human subjects, does the paper include the full text of instructions given to participants and screenshots, if applicable, as well as details about compensation (if any)?
    \item[] Answer: \answerNA{}.
    \item[] Justification: The paper does not involve crowdsourcing, user studies, or human-subject experiments.

\item {\bf Institutional review board (IRB) approvals or equivalent for research with human subjects}
    \item[] Question: Does the paper describe potential risks incurred by study participants, whether such risks were disclosed to the subjects, and whether Institutional Review Board (IRB) approvals (or an equivalent approval/review based on the requirements of your country or institution) were obtained?
    \item[] Answer: \answerNA{}.
    \item[] Justification: The paper does not involve human subjects, user studies, or crowdsourced data collection.

\item {\bf Declaration of LLM usage}
    \item[] Question: Does the paper describe the usage of LLMs if it is an important, original, or non-standard component of the core methods in this research? Note that if the LLM is used only for writing, editing, or formatting purposes and does not impact the core methodology, scientific rigorousness, or originality of the research, declaration is not required.
    \item[] Answer: \answerNA{}.
    \item[] Justification: LLMs are not used as an important, original, or non-standard component of the core method. Any use of writing or editing tools, if any, does not affect the methodology, experiments, or scientific claims.

\end{enumerate}